\title{Topological Data Analysis of Decision Boundaries with Application to Model Selection}
\author{
  Karthikeyan Natesan Ramamurthy\\
  IBM Research\\
  Yorktown Heights, NY 10598 \\
  \texttt{knatesa@us.ibm.com} \\
  \And
  Kush R.\ Varshney \\
  IBM Research \\
  Yorktown Heights, NY 10598 \\
  \texttt{krvarshn@us.ibm.com} \\
  \And
  Krishnan Mody\thanks{Part of this work was completed when K.\ Mody was with IBM Research.} \\
  New York University \\
  New York, NY 10012 \\
  \texttt{km2718@nyu.edu} \\
}
\newtheorem{theorem}{Theorem}
\newtheorem{lemma}{Lemma}
\theoremstyle{definition}
\newtheorem{definition}{Definition}
\theoremstyle{remark}
\newtheorem*{remark}{Remark}
\begin{document}

\maketitle

\begin{abstract}
We propose the labeled \v{C}ech complex, the plain labeled Vietoris-Rips complex, and the locally scaled labeled Vietoris-Rips complex to perform persistent homology inference of decision boundaries in classification tasks. We provide theoretical conditions and analysis for recovering the homology of a decision boundary from samples.  Our main objective is quantification of deep neural network complexity to enable matching of datasets to pre-trained models; we report results for experiments using MNIST, FashionMNIST, and CIFAR10.
\end{abstract}

\section{Introduction}
\label{sec:intro}

In supervised learning, the term \emph{model selection} usually refers to the process of using validation data to tune hyperparameters.  However, we are moving toward a world in which model selection refers to marketplaces of pre-trained deep learning models in which customers select from a vendor's collection of available models, often without the ability to run validation data through them or being able to change their hyperparameters.  Such a marketplace paradigm is sensible because deep learning models have the ability to generalize from one dataset to another \cite{arpit2017closer, zhang2016understanding, kawaguchi2017generalization}. In the case of classifier selection, the use of data and decision boundary complexity measures, such as the critical sample ratio (the density of data points near the decision boundary), can be a helpful tool \cite{HoB2002,arpit2017closer}. 

In this paper, we propose the use of persistent homology \cite{edelsbrunner2008persistent}, a type of topological data analysis (TDA) \cite{Carlsson2009}, to quantify the complexity of neural network decision boundaries. Persistent homology involves estimating the number of connected components and number of holes of various dimensions that are present in the underlying manifold that data samples come from. This complexity quantification can serve multiple purposes, but we focus how it can be used as an aid for matching vendor pre-trained models to customer data. To this end, we must extend the standard conception of TDA on point clouds of unlabeled data, and develop new techniques to apply TDA to decision boundaries of labeled data. 

In our previous work \cite{VarshneyR2015}, the only prior work we are aware of on TDA of decision boundaries, we use persistent homology to tune hyperparameters of radial basis function kernels and polynomial kernels.  The contributions herein have greater breadth and theoretical depth as we detail below.  A recent preprint also examines TDA of labeled data \cite{GussS2018}, but approaches the problem as standard TDA on separate classes rather than trying to characterize the topology of the decision boundary. In the appendix, we discuss how this approach can be fooled by the internal structure of the classes.  There has also been theoretical work using counts of homological features, known as Betti numbers, to upper and lower bound the number of layers and units of a neural network needed for representing a function \cite{BianchiniS2014}.  That work does not deal with data, as we do here.  Moreover,  its bounds are quite loose and not really usable in practice, similar in their looseness to the bounds for algebraic varieties \cite{Milnor1964, BasuPR2005} cited by \cite{VarshneyR2015} for polynomial kernel machines.

The main steps in a persistent homology analysis are as follows. We treat each data point as a node in a graph, drawing edges between nearby nodes\hspace{1pt---}\hspace{1pt}where nearby is according to a scale parameter. We form complexes from the simplices formed by the nodes and edges, and examine the topology of the complexes as a function of the scale parameter.  The topological features such as connected components, and holes of various dimensions that persist across scales are the ones that capture the underlying shape of the dataset.  In all existing approaches to persistent homology, the scale parameter is a single global value that does not factor in the local scaling of the dataset, making the inference of Betti numbers from persistence brittle and difficult to automate.

Our main contributions are as follows:
\begin{enumerate}
\item We introduce a new simplicial complex construction called the labeled \v{C}ech complex that captures decision boundary topology.  We provide theoretical conditions on the decision boundary and the data samples near the boundary that lead to the successful recovery of the homology of the decision boundary. 
\item We propose a computationally efficient construction of decision boundary surfaces: the labeled Vietoris-Rips complex.  We illustrate the need for local scaling to handle non-uniform sampling of data near the decision boundary and address this need by proposing a simplicial complex construction based on estimates of local scale using a k-nearest neighbors method.
\item We evaluate the merits of the above approaches using synthetic and real-world data experiments. Using synthetic data experiments, we show that the proposed approaches recover the homology even when there is extreme local scaling. Using the real-world application domains MNIST, FashionMNIST and CIFAR10, we show how these approaches can be used to evaluate the topological complexity of decision boundaries of deep neural network classifiers. Our main finding in terms  of model selection can be summarized as follows: when choosing a pre-trained network, one whose topological complexity matches that of the dataset yields good generalization.
\end{enumerate}

We defer detailed background on persistent homology and simplicial constructions for \emph{unlabeled} point cloud data to the appendix.  Throughout this work we assume the labels to be binary for simplicity; multi-class extensions can consider decision boundaries in one-vs-one, one-vs-all and Venn diagram constructions \cite{VarshneyW2010}. 

\section{Labeled \v{C}ech Complex and Recovery Guarantees}
\label{sec:theory}

In this section, we introduce the labeled \v{C}ech (L\v{C}) complex and prove results on its use for recovering the homology of a decision boundary.  The high-level idea is as follows: to recover the homology of a decision boundary, we must cover it such that the cover is its deformation retract.  The practically- and computationally-oriented reader may safely proceed to Section~\ref{sec:top_dec_bound} after noting the definition of the decision boundary and the proposed (computationally intractable) L\v{C} complex.

\subsection{Decision Boundary Manifold}
Decision boundaries are hypersurfaces, surfaces of dimension $d-1$ in ambient spaces of dimension $d$, that divide a space into two classes. We define the overall probability space $\mathcal{Z}$ with the measure given by $\mu_z$ and the pdf $p_Z$. We assume two classes that can be conditioned from this space using the selector $C$; the pdfs being $p_X = p_{Z|C}(z|1)$ and $p_Y = p_{Z|C}(z|0)$. We denote the mixture probabilities as $p_C(0) = q$ and $p_C(1) = 1-q$, such that $p_Z(z) = p_{Z|C} (z|1) p_C(1) + p_{Z|C} (z|0) p_C(0)$. By the Neyman-Pearson rule, the decision boundary manifold is defined by  $\mathcal{M} = \{z \in \mathcal{Z} \mid p_Y = p_X\}$. 

Let us define the extent of the distribution where the two classes are mixed by the set 
\begin{equation}
\mathcal{D} = \{z \in \mathcal{Z} | p_{Z|C}(z|0) > 0, p_{Z|C}(z|1) > 0 \}.
\end{equation} This is the set where both distributions have some mass. We also denote the set $\mathcal{G} = \{z \in \mathcal{Z} \mid (p_{Z|C}(z|0) = 0 \lor p_{Z|C}(z|1) > 0) \land (p_{Z|C}(z|0) > 0 \lor p_{Z|C}(z|1) = 0) \}$, where one of the classes has zero mass and the other class has non-zero mass.

\subsection{Labeled \v{C}ech Complex}

The homology of a manifold can be recovered by an appropriate random sampling and computing a \v{C}ech complex on the random samples. The same idea can be extended to the case of a decision boundary, which is a manifold at the intersection of the two classes. We need a construction which is homotopic to this manifold. To this end, we introduce the \textit{labeled \v{C}ech complex}.
 
\begin{definition}
\label{def:lc_complex}
An $(\epsilon, \gamma)$-labeled \v{C}ech complex, is a simplicial complex with a collection of simplices such that each simplex $\sigma$ is formed on the points in the set $S$ aided by the reference set $W$, when the following conditions are satisfied:
\begin{enumerate}
\item $\displaystyle \bigcap_{s_i \in \sigma} B_{\epsilon}(s_i) \neq \emptyset$, where $s_i$ are the vertices of $\sigma$.
\item $\forall s_i \in \sigma, \quad \exists w \in W$ such that, $\|s_i - w\| \leq \gamma$.
\end{enumerate}
\end{definition}
This definition matches the usual \v{C}ech complex, but introduces the additional constraint that a simplex is induced only if all its vertices are close to some point in the reference set $W$. The second condition also implies that $W$ is $\gamma$-dense in the vertices of the simplices of the $(\epsilon, \gamma)$-L\v{C} complex.

\subsection{Recovery Guarantees}

Now, we derive sufficient sampling conditions so that the L\v{C} complex is homotopic to the decision boundary manifold and hence recovers it homology. The general idea is that when sufficient samples are drawn near $\mathcal{M}$, we can cover $\mathcal{M}$ using balls of radius $r$ ($B_r(z)$), and $U$ deformation retracts to $\mathcal{M}$. The nerve of the covering will be homotopic to $\mathcal{M}$ according to the Nerve Lemma \cite{borsuk1948imbedding}. The intuition is that when we have dense enough sampling, the nerve of the \v{C}ech complex is homotopic to the manifold \cite{niyogi2008finding}. If the sampling is not sufficiently dense, we run into the danger of breaching the `tubular neighborhood' of the manifold since the $\epsilon$ in the \v{C}ech complex has to be large. In our L\v{C} complex, points from one class will be used to construct the actual complex, and the points from the other class will be used as the reference set per Definition \ref{def:lc_complex}.

\paragraph{Sketch of the theory:} Lemma \ref{lem:lc_complex} shows the equivalence of the L\v{C} complex to a particular \v{C}ech complex on unlabeled data, helping us build our theory from existing results in  \cite{niyogi2008finding}. Theorem \ref{thm:set_density} lower bounds the sample size needed to cover two sets of sets, laying the ground for our main sample complexity result. Theorem \ref{thm:density_cover_manifold} provides the sample complexity for a dense sampling of the decision boundary manifold, and the main result in Theorem \ref{thm:sample_complexity} gives the sufficient conditions under which an L\v{C} complex on the sampled points from the two classes will be homotopic to the decision boundary.

Let us assume that the decision boundary is a manifold $\mathcal{M}$ with condition number $1/\tau$. This means that the open normal bundle about $\mathcal{M}$ of radius $\tau$ is embedded in $\mathbb{R}^d$. In other words, the normal bundle is non self-intersecting.  We also place the following assumptions.
\begin{itemize}

\item $\mathcal{D}$ is contained in the tubular neighborhood of radius $r$ around $\mathcal{M}$, i.e., $\mathcal{D} \subset \text{Tub}_r(\mathcal{M})$.
\item For every $0 < s < r$, the mass around a point $p$ in $\mathcal{M}$ is at least $k_{s}^{(c)}$ in both classes. There is sufficient mass in both classes:
\begin{equation}
\label{eqn:reg_prop}
\inf_{p \in \mathcal{M}} \mu_c(B_{\epsilon} (p)) > k_{s}^{(c)} \quad \forall c \in \{0,1\}. 
\end{equation}
\end{itemize}

\begin{lemma}
\label{lem:lc_complex}
As $\epsilon$ varies from $0$ to $\infty$, a filtration is induced on the $(\epsilon, \gamma)$-L\v{C} complex for a fixed $\gamma$.
\end{lemma}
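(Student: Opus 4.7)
My plan is to establish the filtration property by showing two things: that for every $\epsilon > 0$ the $(\epsilon,\gamma)$-L\v{C} complex is a bona fide simplicial complex (closed under taking faces), and that the family is monotone in $\epsilon$, i.e., $(\epsilon_1,\gamma)\text{-L\v{C}} \subseteq (\epsilon_2,\gamma)\text{-L\v{C}}$ whenever $\epsilon_1 \leq \epsilon_2$. These two facts together give a filtration indexed by $\epsilon$.

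First I would verify closure under faces. Suppose $\sigma$ is a simplex of the $(\epsilon,\gamma)$-L\v{C} complex with vertex set $\{s_0,\ldots,s_k\}$, and let $\tau \subseteq \sigma$ be a face with vertex set $V \subseteq \{s_0,\ldots,s_k\}$. For condition (1), $\bigcap_{s_i \in \sigma} B_{\epsilon}(s_i) \neq \emptyset$ immediately implies $\bigcap_{s_i \in \tau} B_{\epsilon}(s_i) \neq \emptyset$, since the intersection over a smaller set contains the intersection over the larger. For condition (2), each vertex of $\tau$ already lies within $\gamma$ of some point in $W$, because this held for every vertex of $\sigma$. Hence $\tau$ is a simplex of the complex, and the complex is genuinely simplicial.

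Next I would establish monotonicity. Fix $\epsilon_1 \leq \epsilon_2$ and let $\sigma$ be a simplex of the $(\epsilon_1,\gamma)$-L\v{C} complex. Condition (2) in Definition \ref{def:lc_complex} depends only on $\gamma$ and $W$, not on $\epsilon$, so it carries over verbatim to the $(\epsilon_2,\gamma)$-L\v{C} complex. For condition (1), since $B_{\epsilon_1}(s_i) \subseteq B_{\epsilon_2}(s_i)$ for each vertex $s_i$, we have $\bigcap_{s_i \in \sigma} B_{\epsilon_1}(s_i) \subseteq \bigcap_{s_i \in \sigma} B_{\epsilon_2}(s_i)$, so the latter intersection is also nonempty. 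Therefore $\sigma$ belongs to the $(\epsilon_2,\gamma)$-L\v{C} complex, proving inclusion.

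Combining the two observations, we obtain a nested family of simplicial complexes indexed by $\epsilon \in [0,\infty)$, which is by definition a filtration. I do not anticipate any genuine obstacle: the key insight is simply that the $W$-proximity constraint is $\epsilon$-independent, so the $\epsilon$-monotonicity reduces to the standard monotonicity of the ordinary \v{C}ech filtration. The only mild subtlety worth flagging is that the vertex set (the $0$-skeleton) is itself determined by condition (2) and therefore does not change with $\epsilon$, which is what makes the filtration clean.
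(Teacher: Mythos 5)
Your proof is correct and rests on the same key observation as the paper's: condition (2) of Definition \ref{def:lc_complex} is $\epsilon$-independent, so it freezes the vertex set to the points of $S$ within $\gamma$ of $W$, and the filtration property then reduces to the standard monotonicity of the \v{C}ech construction. The paper states this as a one-line equivalence between the $(\epsilon,\gamma)$-L\v{C} complex and the $\epsilon$-\v{C}ech complex on the subset $S_{\gamma}$, while you unpack the same reduction into an explicit verification of face-closure and $\epsilon$-monotonicity, which is just a more detailed rendering of the identical argument.
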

\begin{proof}
Fixing $\gamma$, we choose $S_{\gamma} \subseteq S$, such that $W$ is $\gamma$-dense in $S_{\gamma}$. Therefore, the $(\epsilon, \gamma)$-L\v{C} complex on $V$ is equivalent to an $\epsilon$-\v{C}ech complex on $S_{\gamma}$, and hence varying $\epsilon$ induces a filtration.
\end{proof}

\begin{remark}
The $(\epsilon, \gamma)$-\v{C}ech complex can be used to delineate the decision boundary by choosing $S$ to be the samples of one class and $W$ to be the other class.

Given sufficient samples in $S$ and $W$, $\epsilon$-balls on $S_{\gamma}$ will be homotopic to $\mathcal{M}$. Since homotopy implies same homology, this is how we use the L\v{C} complex to identify the homology of the decision boundary.
\end{remark}

\begin{theorem}
\label{thm:set_density}
Let $\{A_i\}_{i=1}^{l_a}$ and $\{B_j\}_{j=1}^{l_b}$ be two sets of measurable sets. Let $\mu_x$ and $\mu_y$ be the probability measures on $\bigcup_{i=1}^{l_a} A_i$ and $\bigcup_{j=1}^{l_b} B_j$, respectively, such that $\mu_x(A_i) > \alpha_x, \forall i \in \{1, 2, \ldots, l_a\}$ and $\mu_y(B_i) > \alpha_y, \forall j \in \{1, 2, \ldots, l_b\}$. Let $\mu_x$ and $\mu_y$ be the component measures of $\mu_z$, such that $\mu_z(F) = q \mu_x(F)+(1-q) \mu_y(F)$, $q$ and $1-q$ being the mixture probabilities. Let $\overline{z} = \{z_1, z_2, \ldots, z_n\}$ be the set of $n$ i.i.d. draws according to $\mu_z$, which can be partitioned into two sets $\overline{x}$ and $\overline{y}$ which contain the samples from the measures $\mu_x$ and $\mu_y$. Then, if
\begin{equation}
\label{eqn:n_bound_thm}
n \geq \max \left( \frac{1}{\alpha_x q}\left( \log 2 l_a + \log \frac{1}{\delta} \right), 
                            \frac{1}{\alpha_y (1-q)}\left( \log 2 l_b + \log \frac{1}{\delta} \right) \right)
\end{equation} we are guaranteed with probability greater than $1-\delta$ that
\begin{equation}
\label{eqn:intersect_cond_them}
\forall i, \overline{x} \cap A_i \neq \emptyset  \quad \text{and} \quad \forall j, \overline{y} \cap B_j \neq \emptyset.
\end{equation} 
\end{theorem}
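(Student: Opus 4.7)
The plan is to argue by a straightforward union bound once I identify, for each individual set $A_i$ or $B_j$, the probability that a single draw from $\mu_z$ lands inside it and carries the correct class label. The draws from $\mu_z$ can be coupled with the partition into $\overline{x}$ and $\overline{y}$: each sample $z_k$ independently lands in the "$x$-part" with probability $q$ (contributing a draw from $\mu_x$) and in the "$y$-part" with probability $1-q$ (contributing a draw from $\mu_y$). Hence for a fixed $A_i$, the probability that a single $z_k$ both comes from $\mu_x$ and falls into $A_i$ is exactly $q\,\mu_x(A_i) > q\alpha_x$, and analogously for $B_j$ the probability is strictly greater than $(1-q)\alpha_y$.

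Next, I would bound the failure probabilities for individual sets. Fix $A_i$; the probability that \emph{none} of the $n$ samples is a class-$x$ point lying in $A_i$ is at most $(1-q\alpha_x)^n \leq e^{-n q \alpha_x}$. The same estimate with $(1-q)\alpha_y$ gives $e^{-n(1-q)\alpha_y}$ for each $B_j$. A union bound over all $l_a+l_b$ "missed set" events yields
\begin{equation*}
\Pr\bigl[\exists i:\overline{x}\cap A_i=\emptyset \text{ or } \exists j: \overline{y}\cap B_j=\emptyset\bigr]\;\leq\; l_a e^{-n q \alpha_x} + l_b e^{-n(1-q)\alpha_y}.
\end{equation*}

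To finish, I would force each of the two terms to be at most $\delta/2$ separately. The first requirement $l_a e^{-n q \alpha_x}\leq \delta/2$ rearranges to $n \geq \frac{1}{q\alpha_x}(\log 2l_a + \log(1/\delta))$, and the second to $n\geq \frac{1}{(1-q)\alpha_y}(\log 2l_b + \log(1/\delta))$. Taking $n$ to be the maximum of these two thresholds is exactly the bound \eqref{eqn:n_bound_thm}, and under this choice the total failure probability is at most $\delta$, giving \eqref{eqn:intersect_cond_them} with probability at least $1-\delta$.

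There is no genuine obstacle here; the only subtlety worth being careful about is the coupling between "$z_k$ is drawn from $\mu_z$" and "$z_k$ is assigned to $\overline{x}$ or $\overline{y}$," so that the event $\{z_k\in A_i\}\cap\{z_k\in\overline{x}\}$ has probability $q\,\mu_x(A_i)$ rather than just $\mu_x(A_i)$ or $\mu_z(A_i)$. Once that coupling is written down explicitly, the rest is the standard coverage argument and the two separate exponential tail bounds can be combined using the factor-of-two split of $\delta$ that produces the $\log 2l_a$ and $\log 2l_b$ terms appearing in the statement.
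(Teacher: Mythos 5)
Your proof is correct and takes essentially the same route as the paper: per-set miss probabilities, a union bound over all $l_a + l_b$ sets, the bound $1-u \leq e^{-u}$, and a $\delta/2$ split producing the $\log 2l_a$ and $\log 2l_b$ terms. The only difference is presentational: where the paper conditions on $|\overline{x}| = n_x$ and marginalizes over its binomial distribution (the ``after some algebra'' step collapsing to $l_a(1-q\alpha_x)^n + l_b(1-(1-q)\alpha_y)^n$), you reach the same expression directly via the per-sample Bernoulli coupling, a slightly cleaner organization of the identical computation.
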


\begin{proof}
Let us assume that among the $n$ samples $\overline{z}$ drawn, $|\overline{x}| = n_x$ and $|\overline{y}| = n_y$, so that $n = n_x + n_y$. Let us denote the event $x \notin A_i$ for any $i$ as $E_i^{a}$ and the event $y \notin B_j$ for any $j$ as $E_j^{b}$. The probability of these events are
\begin{align}
P(E_i^{a} | |\overline{x}| = n_x) &= (1-\mu_x(A_i))^{n_x} \leq (1-\alpha_x)^{n_x} \text{ and }\\
P(E_j^{b} | |\overline{y}| = n_y) &= (1-\mu_y(B_i))^{n_y} \leq (1-\alpha_y)^{n_y}.
\end{align} The probability bound on the composite event  (\ref{eqn:intersect_cond_them}) is expressed as
\begin{equation}
P \left( \left( \cap_i\overline{E_i^{a}}\right) \cap \left( \cap_j\overline{E_j^{b}}\right) \right) > 1- \delta,
\end{equation} which simplifies to
\begin{equation}
P \left( \overline{\left( \cup_i E_i^{a}\right) \cup \left( \cup_j E_j^{b}\right)} \right) > 1- \delta.
\end{equation} This implies that 
\begin{equation}
\label{eqn:E_a_b_prob}
P(\cup_i E_i^{a})+P(\cup_i E_j^{b})
\end{equation} should be bounded from above by $\delta$. The individual conditional probabilities can be union-bounded as $P(\cup_i E_i^{a} | |\overline{x}| = n_x) \leq l_a (1-\alpha_x)^{n_x}$ and $P(\cup_i E_j^{b} | |\overline{y}| = n_y) \leq l_b (1-\alpha_y)^{n_y}$. Hence, the upper bound on \eqref{eqn:E_a_b_prob} is
\begin{equation}
\label{eqn:E_a_b_prob_bound}
\sum_{n_x = 0}^n P(\cup_i E_i^{a} | |\overline{x}| = n_x) p(|\overline{x}| = n_x) + P(\cup_i E_j^{b} | |\overline{y}| = n-n_x) p(|\overline{y}| = n-n_x),
\end{equation} which, after some algebra, simplifies to
\begin{equation}
\label{eqn:E_a_b_prob_bound1}
 l_a (1-q \alpha_x)^{n} +  l_b (1-(1-q) \alpha_y)^{n}.
 \end{equation}
We need to find an $n$ such that the expression in \eqref{eqn:E_a_b_prob_bound1} is bounded above by $\delta$.
 
Since we know $1-\alpha q \leq \exp(-\alpha q)$ using Taylor approximation, when $n > \frac{1}{\alpha q } (\log 2l + \log (\frac{1}{\delta}))$, $l \exp(-\alpha q n) \leq \delta/2$. Hence if we pick $n$ according to \eqref{eqn:n_bound_thm}, \eqref{eqn:E_a_b_prob} will be $ \leq \delta$, and with probability greater than $1-\delta$, we can ensure \eqref{eqn:intersect_cond_them}.
\end{proof}

\begin{lemma}
\label{lem:density_correspondence}
For three sets $S$, $W$, and $U$, if $S$ is $r$-dense in $U$ and $W$ is $t$-dense in $U$, there exists an $\hat{S} \subseteq S$, such that the following hold:
\begin{enumerate}
\item \label{lem:c1} $\hat{S}$ is $r$-dense in $U$,
\item \label{lem:c2}  $U$ is $r$-dense in $\hat{S}$,
\item \label{lem:c3}  $W$ is $(r+t)$-dense in $\hat{S}$.
\end{enumerate}
\end{lemma}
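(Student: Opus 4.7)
The plan is a direct constructive argument: build $\hat{S}$ by selecting, for each $u \in U$, a single witness in $S$ that certifies the $r$-density of $S$ in $U$ at the point $u$, and then verify the three claims via the defining density hypotheses and the triangle inequality.

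Concretely, first I would invoke the hypothesis that $S$ is $r$-dense in $U$ to define a map $\phi \colon U \to S$ (using choice if $U$ is uncountable) with $\|\phi(u) - u\| \leq r$ for every $u \in U$. Set $\hat{S} := \phi(U) \subseteq S$. This is the only real design decision in the proof, and it is forced by the goal: we need $\hat{S}$ to be small enough that every one of its elements is close to $U$ (condition~\ref{lem:c2}), yet rich enough to cover $U$ (condition~\ref{lem:c1}), and defining $\hat{S}$ as the image of such a selection accomplishes both at once.

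Next I would verify the three conditions in order. For condition~\ref{lem:c1}, given any $u \in U$, the element $\phi(u) \in \hat{S}$ satisfies $\|\phi(u) - u\| \leq r$ by construction, so $\hat{S}$ is $r$-dense in $U$. For condition~\ref{lem:c2}, any $\hat{s} \in \hat{S}$ is of the form $\phi(u)$ for some $u \in U$, and the same inequality $\|\phi(u) - u\| \leq r$ shows $U$ is $r$-dense in $\hat{S}$. For condition~\ref{lem:c3}, take any $\hat{s} = \phi(u) \in \hat{S}$; by the hypothesis that $W$ is $t$-dense in $U$, there is some $w \in W$ with $\|w - u\| \leq t$, and the triangle inequality
\begin{equation*}
\|\hat{s} - w\| \leq \|\hat{s} - u\| + \|u - w\| \leq r + t
\end{equation*}
gives the desired $(r+t)$-density of $W$ in $\hat{S}$.

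There is no real obstacle here; the lemma is essentially a bookkeeping step packaging together the selection map with one triangle inequality, whose purpose is to later let us replace the abstract covering set $U$ (the manifold or its tubular neighborhood) by a finite sample $\hat{S}$ without losing either the covering property or the reference-set proximity needed to instantiate Definition~\ref{def:lc_complex}. The only thing worth being careful about is that $\hat{S}$ is genuinely a subset of $S$ (immediate since $\phi$ takes values in $S$) and that conditions~\ref{lem:c1}--\ref{lem:c3} are stated with the correct roles of ``dense in'' (symmetric-looking but directional), which the above construction respects.
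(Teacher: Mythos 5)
Your proof is correct and follows essentially the same route as the paper's: both define $\hat{S}$ as a subset of $S$ whose every element is within $r$ of $U$ (the paper takes \emph{all} such points of $S$, you take the image of a selection map $\phi$, which is a cosmetic difference), and both verify conditions~\ref{lem:c1} and~\ref{lem:c2} directly from the construction and condition~\ref{lem:c3} by the same triangle inequality. If anything, your write-up is cleaner than the paper's, which states the defining property of $\hat{S}$ with an ambiguous implicit quantifier on $u$.
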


\begin{proof}
If $S$ is $r$-dense in $U$, for every $u \in U$, there exists an $s \in S$ such that $\|u-s\| < r$. Now, let  $\hat{S} \subseteq S$, $\hat{S} = \{s \in S \mid \|u-s\| < r, u \in U\}$, i.e., for each element $\hat{s} \in \hat{S}$, we have at least one $u \in U$ such that $\|u-\hat{s}\| < r$ and vice-versa. This proves item \ref{lem:c1} and item \ref{lem:c2}. Since for each $u$, we have at least one $w \in W$, such that $\|u-w\| < t$. Hence, by the triangle inequality, for each $\hat{s} \in \hat{S}$, we have at least one $w \in W$ such that $\|s-w\| < (r+t)$.
\end{proof}

\begin{theorem}
\label{thm:density_cover_manifold}
Let $N_{r/2}$ and $N_{s/2}$ be the $r/2$ and $s/2$ covering numbers of the manifold $\mathcal{M}$. Let $G$ and $H$ be two sets of points in $\mathcal{M}$ of sizes $N_{r/2}$ and $N_{s/2}$ such that $B_{r/2}(g_i), g_i \in G$, and $B_{s/2}(h_j), h_j \in H$ are the $r/2$- and $s/2$-covers. Let $\overline{z}$ be generated by i.i.d.\ sampling from $\mu_z$ whose two component measures satisfy the regularity properties in \eqref{eqn:reg_prop}, and have mixing probabilities $q$ and $1-q$ for $q > 0$. Let the two component samples be $\overline{x}$ and $\overline{y}$. Then if 
$\displaystyle |\overline{z}| > \max\left(\frac{1}{q k_{r/2}^{(0)}}  \left(\log \left(2 N_{r/2} \right)+ \log \left(\frac{1}{\delta} \right)\right),  \frac{1}{(1-q) k_{s/2}^{(0)}}  \left( \log \left( 2 N_{r/2} \right)+ \log \left( \frac{1}{\delta} \right) \right) \right)$,
 with probability greater than $1-\delta$, $\overline{x}$ will be $r$-dense in $\mathcal{M}$, and $\overline{y}$ will be $s$-dense in $\mathcal{M}$.
\end{theorem}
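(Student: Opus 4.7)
The plan is to bootstrap Theorem \ref{thm:set_density} by taking the covering balls themselves as the two families of measurable sets. Specifically, I would set $A_i = B_{r/2}(g_i)$ for $i = 1, \ldots, N_{r/2}$ and $B_j = B_{s/2}(h_j)$ for $j = 1, \ldots, N_{s/2}$. Since each $g_i, h_j \in \mathcal{M}$, the regularity condition \eqref{eqn:reg_prop} immediately supplies uniform lower bounds on their masses under the two component measures, giving constants $\alpha_x = k_{r/2}^{(0)}$ and $\alpha_y = k_{s/2}^{(1)}$ of exactly the form needed to invoke the previous theorem.

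Substituting $l_a = N_{r/2}$, $l_b = N_{s/2}$, and these $\alpha$'s into the bound \eqref{eqn:n_bound_thm} reproduces the sample-complexity requirement stated here. Theorem \ref{thm:set_density} then guarantees, with probability at least $1-\delta$, that every $A_i$ contains a point of $\overline{x}$ and every $B_j$ contains a point of $\overline{y}$.

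The remaining step converts this ``every cover ball is hit'' property into the required density on $\mathcal{M}$. For an arbitrary $p \in \mathcal{M}$, the $r/2$-cover produces some $g_i$ with $\|p - g_i\| \leq r/2$; pairing $g_i$ with the point $x \in \overline{x} \cap B_{r/2}(g_i)$ furnished above and applying the triangle inequality yields $\|p - x\| \leq r$, i.e., $\overline{x}$ is $r$-dense in $\mathcal{M}$. The analogous argument on the $s/2$-cover delivers $s$-density of $\overline{y}$.

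The only real difficulty here is bookkeeping: keeping the pairing between mixing probability, class label, covering radius, and mass constant consistent across \eqref{eqn:reg_prop}, Theorem \ref{thm:set_density}, and the bound in the statement (and reconciling what appear to be minor index mismatches in the theorem as written). Beyond that, the proof introduces no new probabilistic content past Theorem \ref{thm:set_density}, and I do not anticipate any genuine technical obstruction.
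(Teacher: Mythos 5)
Your proposal is essentially identical to the paper's own proof: it likewise sets $A_i = B_{r/2}(g_i)$ and $B_j = B_{s/2}(h_j)$, invokes Theorem~\ref{thm:set_density} to guarantee every cover ball is hit, and finishes with the triangle inequality to convert hitting into $r$- and $s$-density on $\mathcal{M}$. The index mismatches you flag (the repeated $N_{r/2}$ and the superscript on $k_{s/2}$) are indeed typos in the statement rather than obstacles, and your bookkeeping choices $\alpha_x = k_{r/2}^{(0)}$, $\alpha_y = k_{s/2}^{(1)}$ are the intended ones.
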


\begin{proof}
Letting $A_i = B_{r/2}(g_i)$, and $B_j = B_{s/2}(h_j)$, apply the previous Theorem. Hence, with probability greater than $1-\delta$, each of  $A_i$ and $B_j$ are occupied by at least one of $x_i \in \overline{x}$, and $y_j \in \overline{y}$ respectively. There it follows that for any $p \in \mathcal{M}$, there is at least one $x \in \overline{x}$ and $y \in \overline{y}$ such that $\|p-x\| < r$, and $\|p-y\| < s$. Thus, with high probability, $\overline{x}$ is $r$-dense in $\mathcal{M}$ and $\overline{y}$ is $s$-dense in $\mathcal{M}$.
\end{proof}

Now we extend Theorem 7.1 in \cite{niyogi2008finding} to the case of the L\v{C} complex and provide the main conditions under which the homology of the decision boundary can be recovered.

\begin{theorem}
\label{thm:sample_complexity}
Let $N_{r/2}$ and $N_{s/2}$ be the $r/2$ and $s/2$ covering numbers of the submanifold $\mathcal{M}$ of $\mathbb{R}^N$. 
Let $\overline{z}$ be generated by i.i.d.\ sampling from $\mu_z$ whose two component measures satisfy the regularity properties in (\ref{eqn:reg_prop}), and have mixing probabilities $q$ and $1-q$ for $q > 0$. Let the two component samples be $\overline{x}$ and $\overline{y}$. Then if 
$\displaystyle |\overline{z}| > \max\left(\frac{1}{q k_{r/2}^{(0)}}  \left(\log \left(2 N_{r/2} \right)+ \log \left(\frac{1}{\delta} \right)\right),  \frac{1}{(1-q) k_{s/2}^{(0)}}  \left( \log \left( 2 N_{s/2} \right)+ \log \left( \frac{1}{\delta} \right) \right) \right)$,
 with probability greater than $1-\delta$, the $(\epsilon, r+s)$-L\v{C} complex will be homotopic to $\mathcal{M}$, if: (a) $r < (\sqrt{9}-\sqrt{8}) \tau$, and (b) $\epsilon \in \left( \frac{(r+\tau)+ \sqrt{r^2+\tau^2-6\tau r}}{2}, \frac{(r+\tau)+ \sqrt{r^2+\tau^2-6\tau r}}{2} \right)$.
 \end{theorem}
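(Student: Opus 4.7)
The plan is to reduce this claim to Theorem~7.1 of \cite{niyogi2008finding} via the equivalence in Lemma~\ref{lem:lc_complex}, using Theorem~\ref{thm:density_cover_manifold} to supply the required density of samples on the decision boundary manifold.

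First, I would invoke Theorem~\ref{thm:density_cover_manifold} directly: the sample-size bound on $|\overline{z}|$ in the statement is precisely the bound produced by that theorem for covering numbers $N_{r/2}$ and $N_{s/2}$ in the two classes. So, with probability at least $1-\delta$, the component sample $\overline{x}$ is $r$-dense in $\mathcal{M}$ and $\overline{y}$ is $s$-dense in $\mathcal{M}$.

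Next, I would identify the effective vertex set of the L\v{C} complex. Let $\hat{S}$ denote the subset of $\overline{x}$ consisting of those $x$ for which there exists some $y \in \overline{y}$ with $\|x - y\| \leq r + s$. By Lemma~\ref{lem:lc_complex}, the $(\epsilon, r+s)$-L\v{C} complex with $S = \overline{x}$ and reference set $W = \overline{y}$ coincides with the ordinary $\epsilon$-\v{C}ech complex on $\hat{S}$. A triangle-inequality step (a direct instance of Lemma~\ref{lem:density_correspondence}) shows that $\hat{S}$ inherits $r$-density in $\mathcal{M}$: for any $p \in \mathcal{M}$, choose $x \in \overline{x}$ with $\|p - x\| \leq r$ and $y \in \overline{y}$ with $\|p - y\| \leq s$; then $\|x - y\| \leq r + s$, so $x \in \hat{S}$ and $p$ lies within $r$ of $\hat{S}$.

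Finally, I would apply Niyogi et al.'s Theorem~7.1 to $\hat{S}$. The assumption $\mathcal{D} \subset \text{Tub}_r(\mathcal{M})$ places $\hat{S}$ within the tubular neighborhood of $\mathcal{M}$, and the hypotheses (a) $r < (\sqrt{9}-\sqrt{8})\tau$ and (b) $\epsilon$ in the stated interval are exactly the hypotheses of that theorem once $r$-density holds. Its conclusion is that the union $\bigcup_{s \in \hat{S}} B_\epsilon(s)$ deformation-retracts to $\mathcal{M}$, and by the Nerve Lemma the $\epsilon$-\v{C}ech complex on $\hat{S}$ is homotopy equivalent to $\mathcal{M}$. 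Transporting this through Lemma~\ref{lem:lc_complex} gives the claim for the $(\epsilon, r+s)$-L\v{C} complex.

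The main obstacle is not a deep new calculation but the careful bookkeeping in the reduction step: one must verify that pruning $\overline{x}$ down to $\hat{S}$ preserves $r$-density and that the resulting \v{C}ech complex on $\hat{S}$ really is the collection of simplices prescribed by Definition~\ref{def:lc_complex}. Once this identification is in place, the three ingredients---sample density from Theorem~\ref{thm:density_cover_manifold}, density transfer from Lemma~\ref{lem:density_correspondence}, and the homotopy guarantee from \cite{niyogi2008finding}---compose with no further geometric work.
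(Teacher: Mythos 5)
Your proposal is correct and follows essentially the same route as the paper's proof: Theorem~\ref{thm:density_cover_manifold} supplies the $r$- and $s$-density of $\overline{x}$ and $\overline{y}$ in $\mathcal{M}$, Lemma~\ref{lem:lc_complex} together with the density-transfer step of Lemma~\ref{lem:density_correspondence} identifies the $(\epsilon, r+s)$-L\v{C} complex with an $\epsilon$-\v{C}ech complex on an $r$-dense effective vertex set, and Theorem~7.1 of \cite{niyogi2008finding} then gives the homotopy equivalence. Your only deviation is cosmetic: you define the effective vertex set $\hat{S}$ by proximity to the reference set $\overline{y}$ (matching $S_\gamma$ in Lemma~\ref{lem:lc_complex}) rather than by proximity to $\mathcal{M}$ as in the paper's $\tilde{x}$, which changes nothing of substance.
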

 
 \begin{proof}
 From Lemma \ref{lem:density_correspondence}, we know that when $\overline{x}$ is $r$-dense in $\mathcal{M}$, and $\overline{y}$ is $s$-dense in $\mathcal{M}$, we have $\tilde{x} \subseteq \overline{x}$ which is also $r$-dense in $\mathcal{M}$ and $\overline{y}$ is $(r+s)$-dense in $\tilde{x}$. Also, from Lemma \ref{lem:lc_complex}, the $(\epsilon, r+s)$-L\v{C} complex on $\overline{x}$ with the reference set $\overline{y}$ is equivalent to the $\epsilon$-\v{C}ech complex on $\tilde{x}$.
 
 Since $\tilde{x}$ is $r$-dense on $\mathcal{M}$, it follows from Theorem 7.1 in \cite{niyogi2008finding} that this $\epsilon$-\v{C}ech on $\tilde{x}$ will be homotopic to $\mathcal{M}$ if the conditions on $r$ and $\epsilon$ are satisfied. 
 \end{proof}

\section{Labeled Vietoris-Rips Complexes}
\label{sec:top_dec_bound}

In this section, we propose two computationally-tractable constructions for simplicial complexes of the decision boundary: one we name the plain labeled Vietoris-Rips complex and the other we name the locally scaled labeled Vietoris-Rips complex.  We illustrate the need for the locally scaled version.

\subsection{Notation}

Let us start with a labeled discrete sample $\{(z_1, c_1), \ldots, (z_n, c_n)\}$ where $z \in \mathbb{R}^d$ is the data point and $c \in \{0, 1\}$ is the class label. Given a data point $z_i$, we define its neighborhood as $\mathcal{N}_{\theta}(z_i)$ where $\theta$ is a scalar neighborhood parameter. The neighbors are restricted to data points whose class $c_j$ is not the same as $c_i$. Our neighborhood construction is symmetric by definition, hence $z_j \in \mathcal{N}_{\theta}(z_i) \Leftrightarrow z_j \in \mathcal{N}_{\theta}(z_i)$. This results in a bipartite graph $G_{\theta}$. 

We use Betti numbers to describe the topology of the decision boundary. $\beta_i$ is the $i^\text{th}$ Betti number: the number of homology groups $H_i$ of dimension $i$. 

\subsection{Two Complexes}

To induce a simplicial complex with simplices of order greater than one from the bipartite graph $G$, we  connect all $2$-hop neighbors. Since the original edges are only between points in opposing classes, all $2$-hop neighbors belong to the same class. A pictorial depiction of this is provided in Appendix \ref{sec:higher_order_simplices}.  This new graph is defined to be one-skeleton of the decision boundary complex. We create a simplicial complex from this one-skeleton using the standard Vietoris-Rips induction \cite{zomorodian2010fast}: a simplex of dimension $r+1$ is inductively included in the complex if all its $r$-dimensional faces are included. We call this the labeled Vietoris-Rips (LVR) complex $\mathcal{V}_{\theta}$.

Our construction is such that, by definition, for $\theta_2 \geq \theta_1$, there is an inclusion $G_{\theta_2} \supseteq G_{\theta_1}$.  Given this inclusion relationship in the bipartite graphs, we obtain a filtration as we vary $\theta$, i.e., for $\theta_2 \geq \theta_1$, $\mathcal{V}_{\theta_2} \supseteq \mathcal{V}_{\theta_1}$. We provide two approaches for creating the LVR complex and its filtration.
\paragraph{Plain LVR (P-LVR) Complex:} We set $\theta$ to be the radius parameter $\epsilon$ and define $\mathcal{N}_{\theta}(z_i)$ as the set of points $\{z_j\}_{c_j \neq c_i, \|z_i - z_j\| \leq \theta}$. 

\paragraph{Locally Scaled LVR (LS-LVR) Complex:} We set $\theta$ to be $\kappa$, the multiplier to the local scale and define $\mathcal{N}_{\theta}(z_i)$ as the set of points $\{z_j\}_{c_j \neq c_i, \|z_i - z_j\| \leq \kappa \sqrt{\rho_i \rho_j}}$, where $\rho_i$ is the local scale of $z_i$. This is defined to be the radius of the smallest sphere centered at $z_i$ that encloses at least $k$ points from the opposite class. LS-LVR construction is based on the generalization of CkNN  graph introduced in \cite{berry2016consistent} to labeled data.

After the LVR filtrations have been obtained, persistent homology of the decision boundaries can be estimated using standard approaches \cite{edelsbrunner2008persistent, zomorodian2005computing}, and represented using barcodes or persistence diagrams (PDs) \cite{edelsbrunner2012persistent}.

\subsection{Illustration of Homology Group Recovery}
\label{sec:demo_homology_rec}
\begin{wrapfigure}{R}{3.3cm}
\caption{A 2-class data with \emph{red} and \emph{blue} classes (top), and the LS-LVR decision boundary complex at $\kappa = 1.005$ (bottom).}\label{fig:two_cir}
\includegraphics[width=3.1cm]{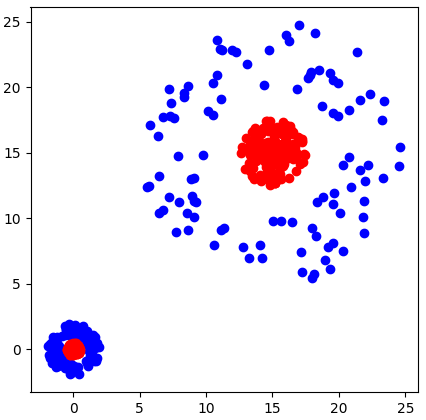}

\includegraphics[width=3.1cm]{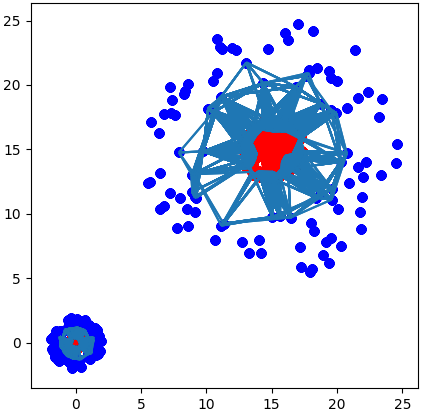}
\vspace{-20pt}
\end{wrapfigure} 

We illustrate these two approaches for constructing decision boundary complexes and estimating their persistent homology using a two-dimensional, two-class dataset given in Figure \ref{fig:two_cir} (top).  The two decision boundaries are homotopic to two circles that separate the classes, and hence the true Betti numbers of the decision boundaries for this data are: $\beta_0 = 2$, and $\beta_1 = 2$. The sampling is non-uniform, with the smaller disk and annulus having more density than the larger ones.

We compute the persistent homology using the P-LVR and LS-LVR complexes. With P-LVR, we vary the radius parameter $\epsilon$ from $0$ to $10$, and with LS-LVR, we vary the local scale multiplier $\kappa$ from $0.5$ to $1.5$. The local scale $\rho$ is computed with $k=5$ neighbors. Figure \ref{fig:two_cir} (bottom) shows a LS-LVR complex at scale $1.005$ that accurately recovers the Betti numbers of the decision boundary.

Figure \ref{fig:pers_two_circle} shows the PDs as well as the Betti numbers for different scales using the two complexes.\footnote{Note that the PD for $H_0$ groups shows all the groups, whereas the Betti numbers in Figures \ref{fig:pers_two_circle}(b) and \ref{fig:pers_two_circle}(d) only show the number of non-trivial homology groups. Non-trivial $H_0$ groups are defined to be those that have more than one data point i.e., the number of simply connected components with size more than 1. Including trivial homology groups is meaningless when computing the topology of decision boundaries since decision boundaries are defined only across classes.} The LS-LVR construction recovers both $\beta_0$ and $\beta_1$ accurately for $\kappa$ slightly greater than 1 and persists until $\kappa$ is slightly less than 1.2. Around this value, one of the holes closes and a little later the other hole collapses as well. The resulting two simply connected components persist until $\kappa=1.5$. 

In contrast, for the P-LVR complex, the $H_0$ and $H_1$ groups first come to life at $\epsilon = 0.9$ for the smaller decision boundary component. The $H_1$ group vanishes almost immediately.  At $\epsilon = 0.38$, the $H_0$ and $H_1$ groups for the larger decision boundary component come to life, persisting for $0.12$. The overall topology ($\beta_0 = 2, \beta_1 = 2$) is not captured at any one scale due to varying sizes of homological features as well as non-uniform sampling. The widely varying life times for homological features make it hard to choose a threshold for estimating the correct number of homology groups. This is not a problem with LS-LVR since the $H_1$ groups appear clustered together in the PD. Another benefit of LS-LVR is that non-noisy homology groups appear around $\kappa = 1$, the natural local scale of the data. This does not hold true for the P-LVR complex.

The actual complexes for various scales with the two constructions are given in the appendix.

\begin{figure}
\begin{minipage}[c]{0.24\linewidth}
  \centering
  \includegraphics[width=3.4cm]{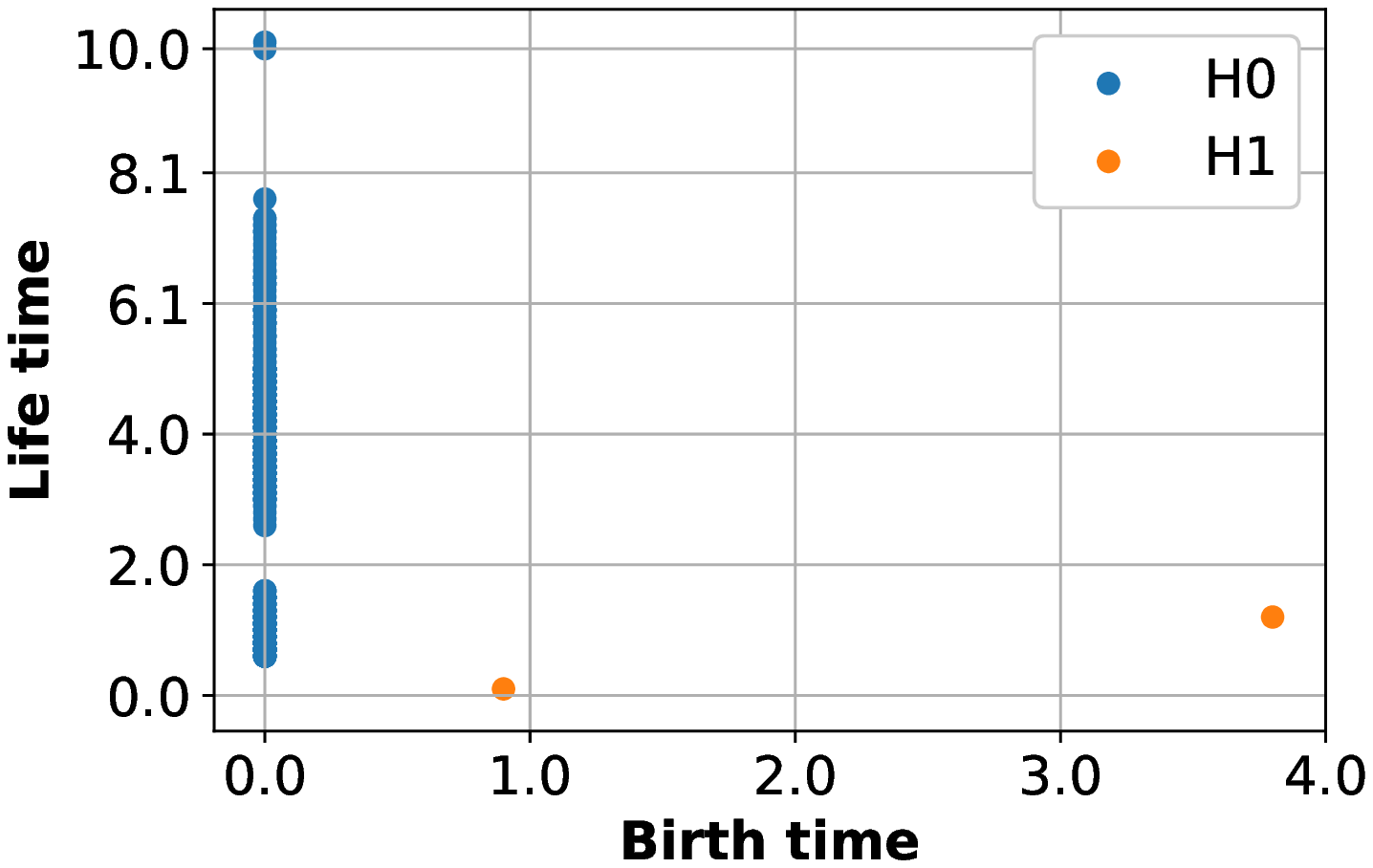}
  \centerline{\footnotesize{(a)}}\medskip
\end{minipage}
\begin{minipage}[c]{0.24\linewidth}
  \centering
  \includegraphics[width=3.4cm]{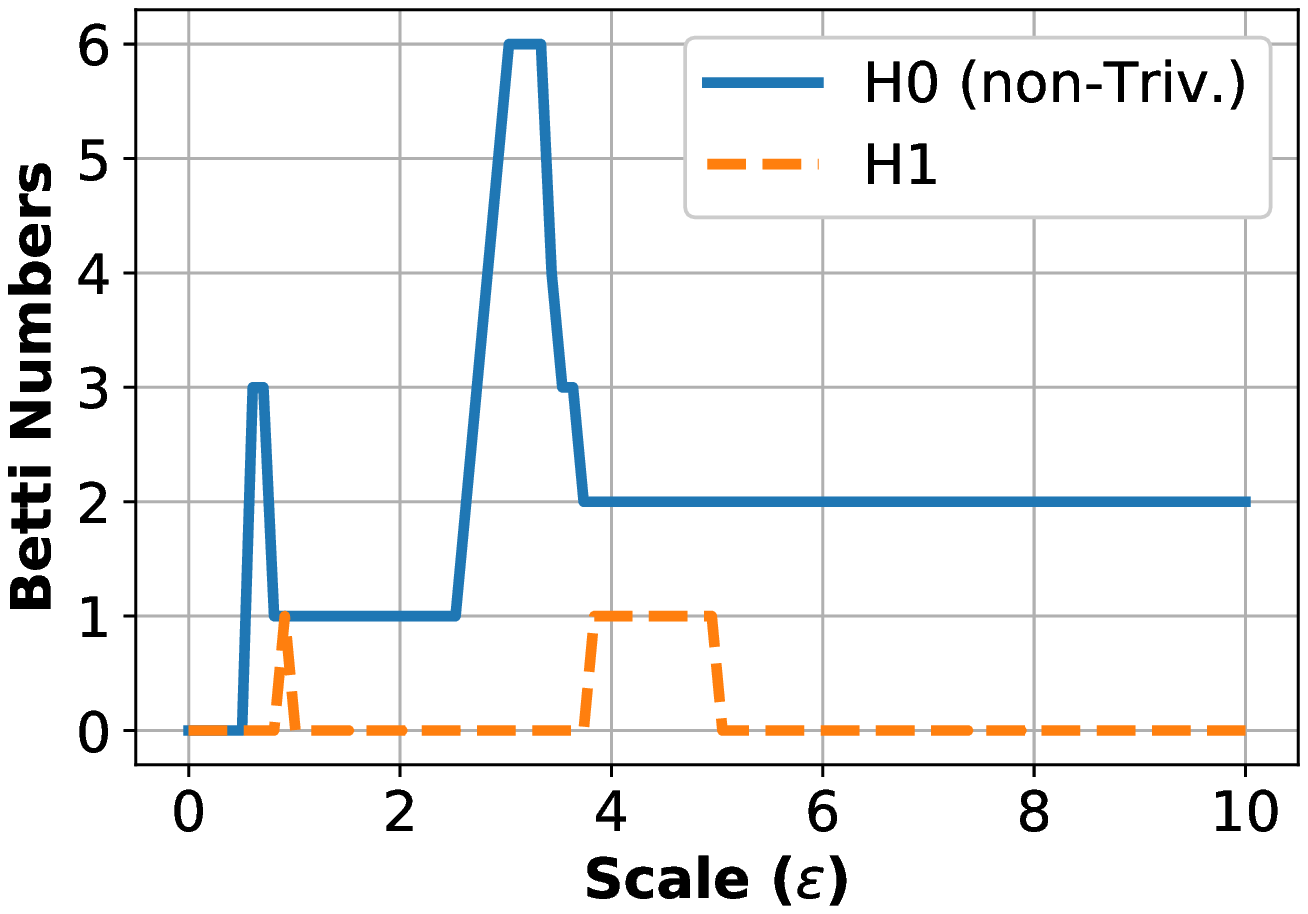}
  \centerline{\footnotesize{(b)}}\medskip
\end{minipage}
\begin{minipage}[c]{0.24\linewidth}
  \centering
  \includegraphics[width=3.4cm]{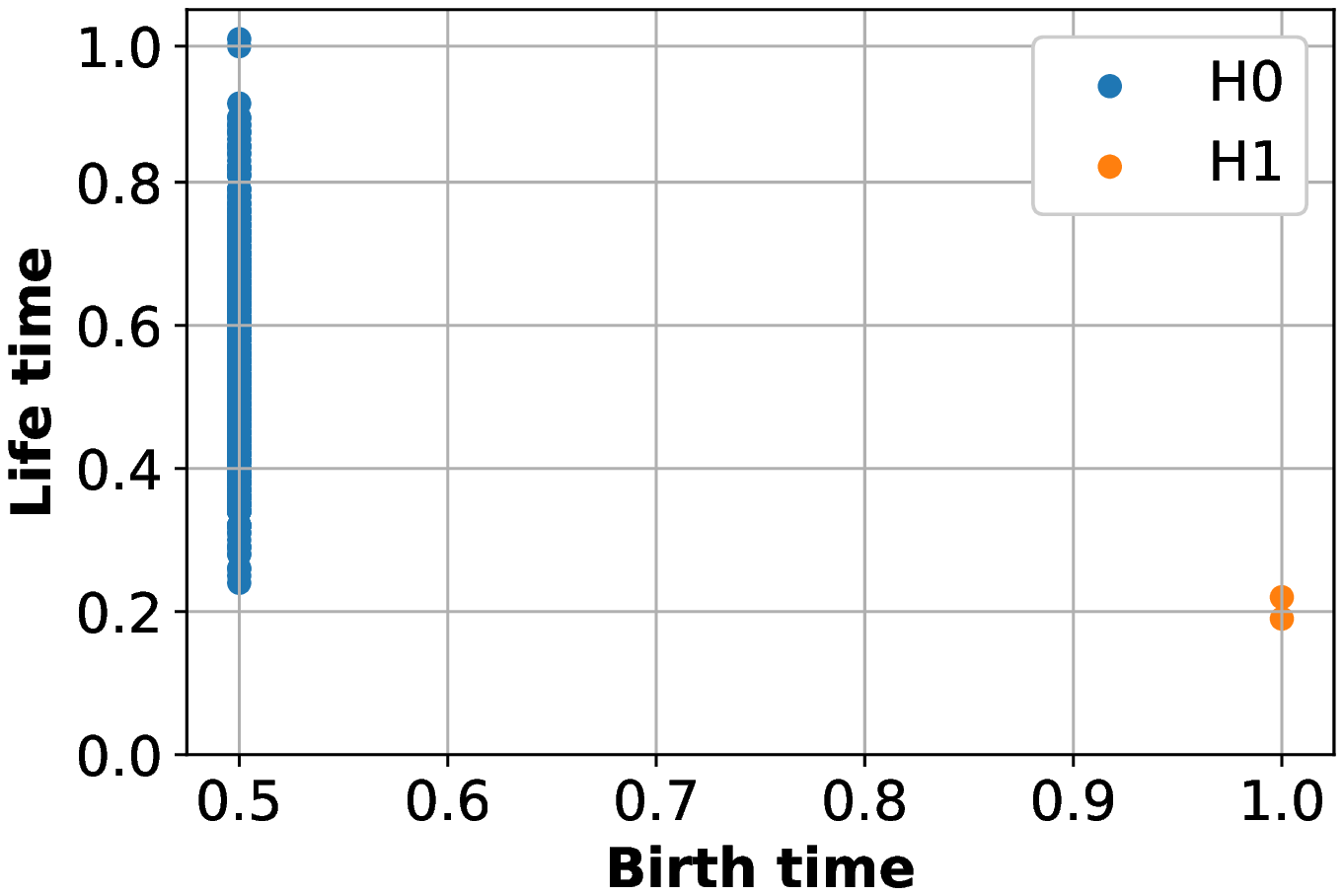}
  \centerline{\footnotesize{(c)}}\medskip
\end{minipage}
\begin{minipage}[c]{0.24\linewidth}
  \centering
  \includegraphics[width=3.4cm]{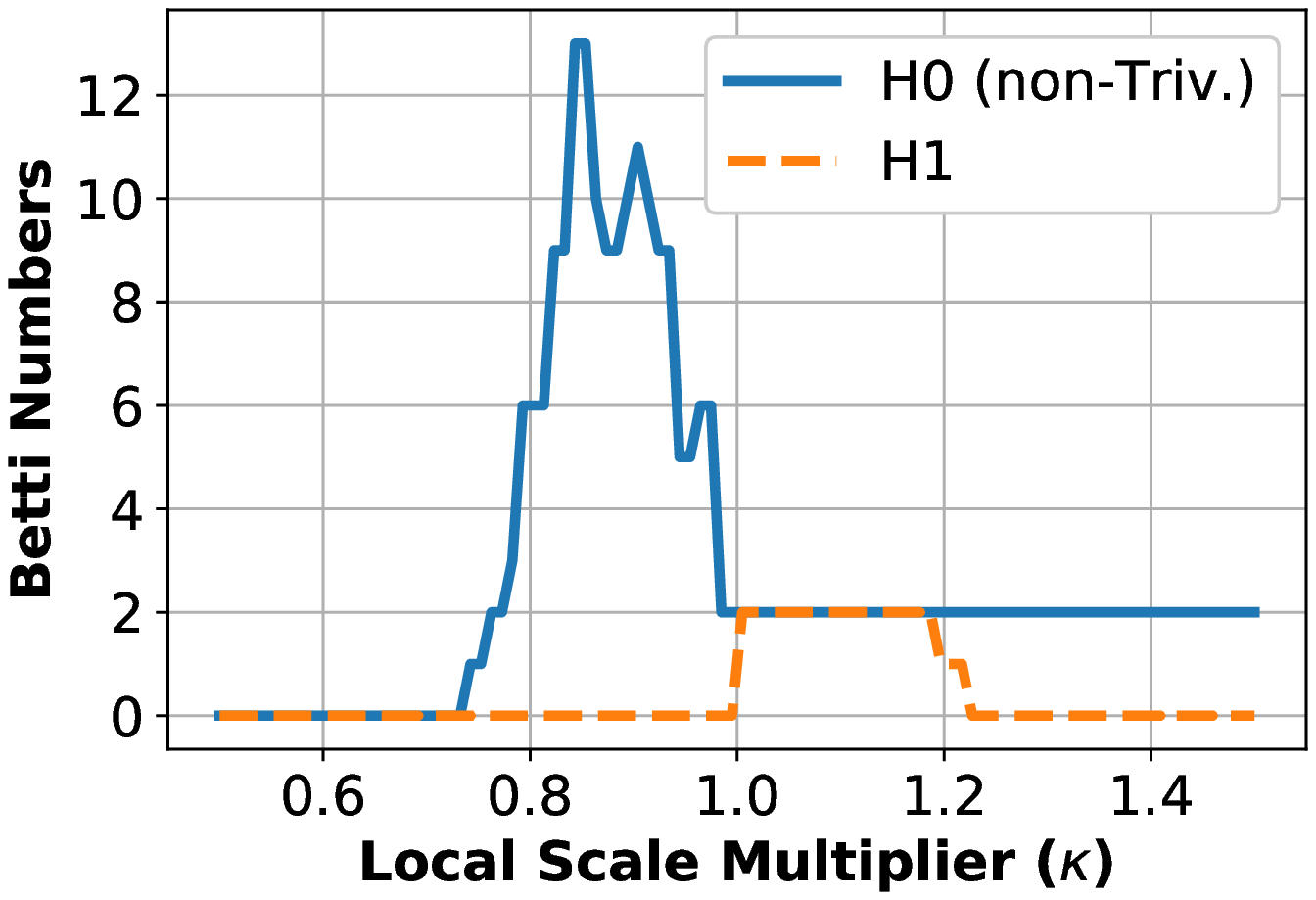}
  \centerline{\footnotesize{(d)}}\medskip
\end{minipage}
\caption{(a) Persistence diagram and (b) Betti numbers as a function of scale using P-LVR, and (c) persistence diagram and (d) Betti numbers using LS-LVR.}
\label{fig:pers_two_circle}
\end{figure}

\section{Experiments}
\label{sec:experiments}

We perform experiments with synthetic and high-dimensional real-world datasets to demonstrate: (a) the effectiveness of our approach in recovering homology groups accurately, and (b) the utility of this method in discovering the topological complexity of neural networks and their potential use in choosing pre-trained models for a new dataset.

In all experiments, to limit the number of simplices, we upper bound the number of neighbors used to compute the neighborhood graph to 20. The results can be reproduced using the code available at \url{https://github.com/nrkarthikeyan/topology-decision-boundaries}. More implementation notes are available in Appendix \ref{sec:impl_notes}.

\subsection{Synthetic Data: Homology Group Recovery}
\label{sec:homo_group_rec_25}
\begin{wrapfigure}{R}{3.3cm}
\caption{A $2-$class data with $\beta_0 = 25, \beta_1 = 25$. Notice the wide variation in sizes of topological features.}\label{fig:25_cir}
\includegraphics[width=3.1cm]{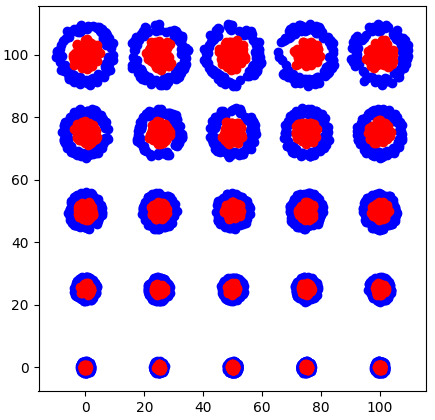}
\vspace{-20pt}
\end{wrapfigure} 
The first experiment demonstrates the effectiveness of our approach in recovering homology groups of complex synthetic data with wide variations in sizes of topological features (Figure \ref{fig:25_cir}). The decision boundary is homotopic to 25 circles ($\beta_0 = 25, \beta_1 = 25$). From Figures \ref{fig:pers_25circles}(c) and \ref{fig:pers_25circles}(d), it is clear that the LS-LVR complex shows similar persistence for all the $25$ $H_1$ groups irrespective of their varying sizes in the dataset. Observe the clumping in the PD, and the presence of a lone noisy $H_1$ group with almost zero life time. The P-LVR complex also recovers the $25$ $H_1$ groups, but does so at different times (Figures \ref{fig:pers_25circles}(a) and \ref{fig:pers_25circles}(b)). From the PD, we can see that there are five rough clumps of $H_1$ groups, around birth times $\{1,2,3,4,5\}$, each containing five $H_1$ groups. The birth times correspond to the radii of the five groups of decision boundaries in Figure \ref{fig:25_cir}. The staggered recovery of topology with the P-LVR complex makes it hard to fix a noise threshold on life times to estimate the correct Betti numbers.

\subsection{Real-World Data: Complexity Estimation and Model Selection}
\label{sec:model_selection}

\begin{figure}
\begin{minipage}[c]{0.24\linewidth}
  \centering
  \includegraphics[width=3.4cm]{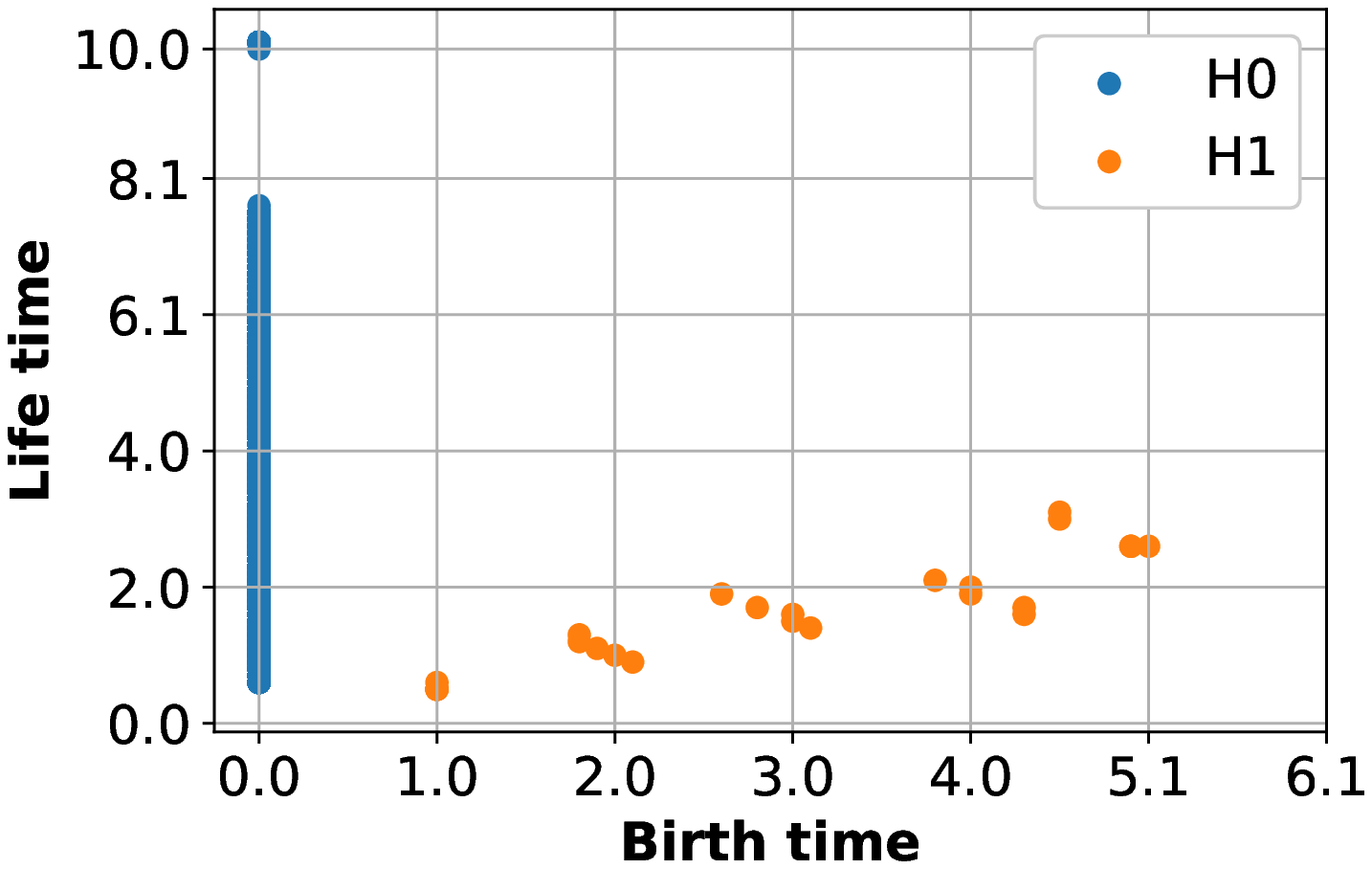}
  \centerline{\footnotesize{(a)}}\medskip
\end{minipage}
\begin{minipage}[c]{0.24\linewidth}
  \centering
  \includegraphics[width=3.4cm]{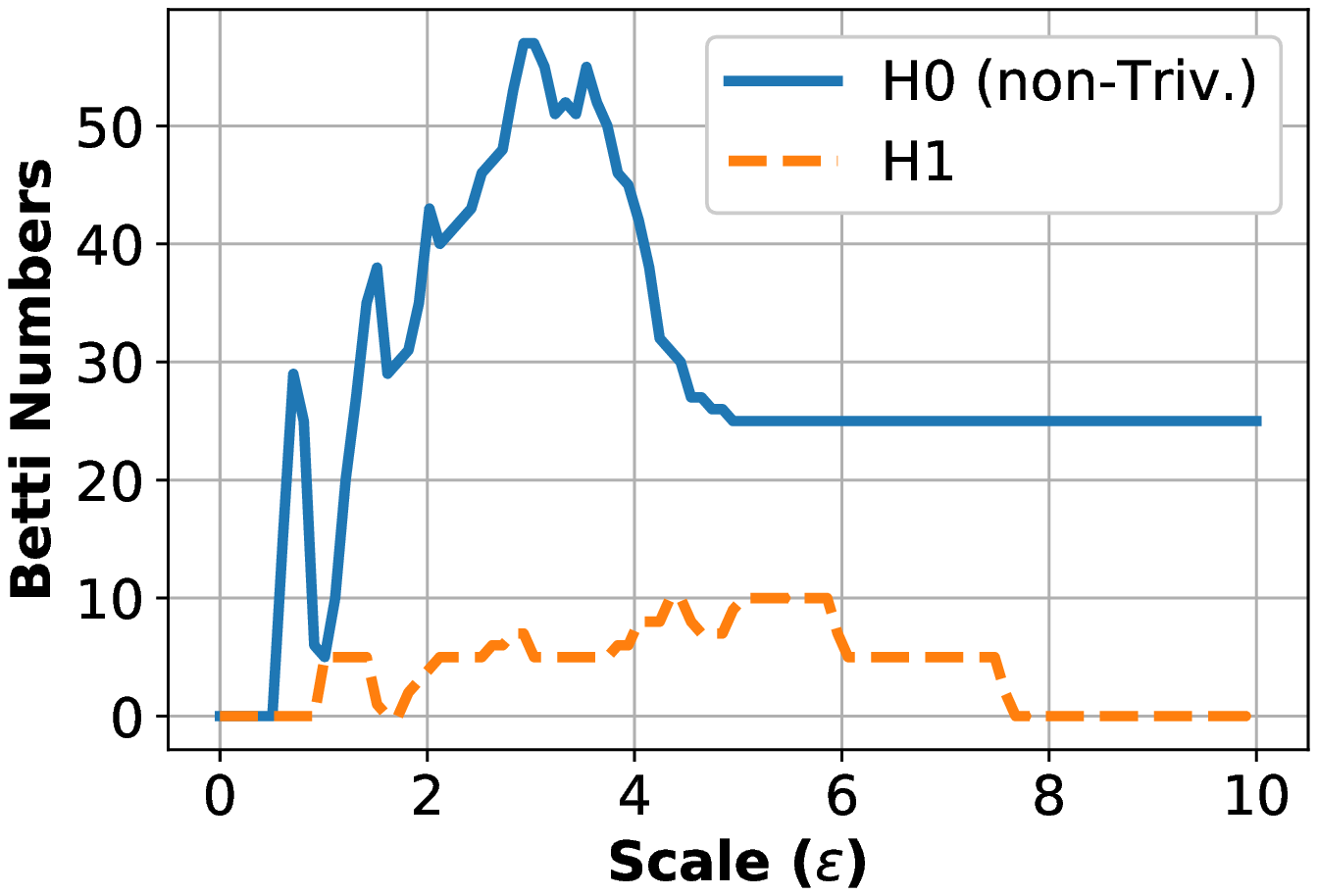}
  \centerline{\footnotesize{(b)}}\medskip
\end{minipage}
\begin{minipage}[c]{0.24\linewidth}
  \centering
  \includegraphics[width=3.4cm]{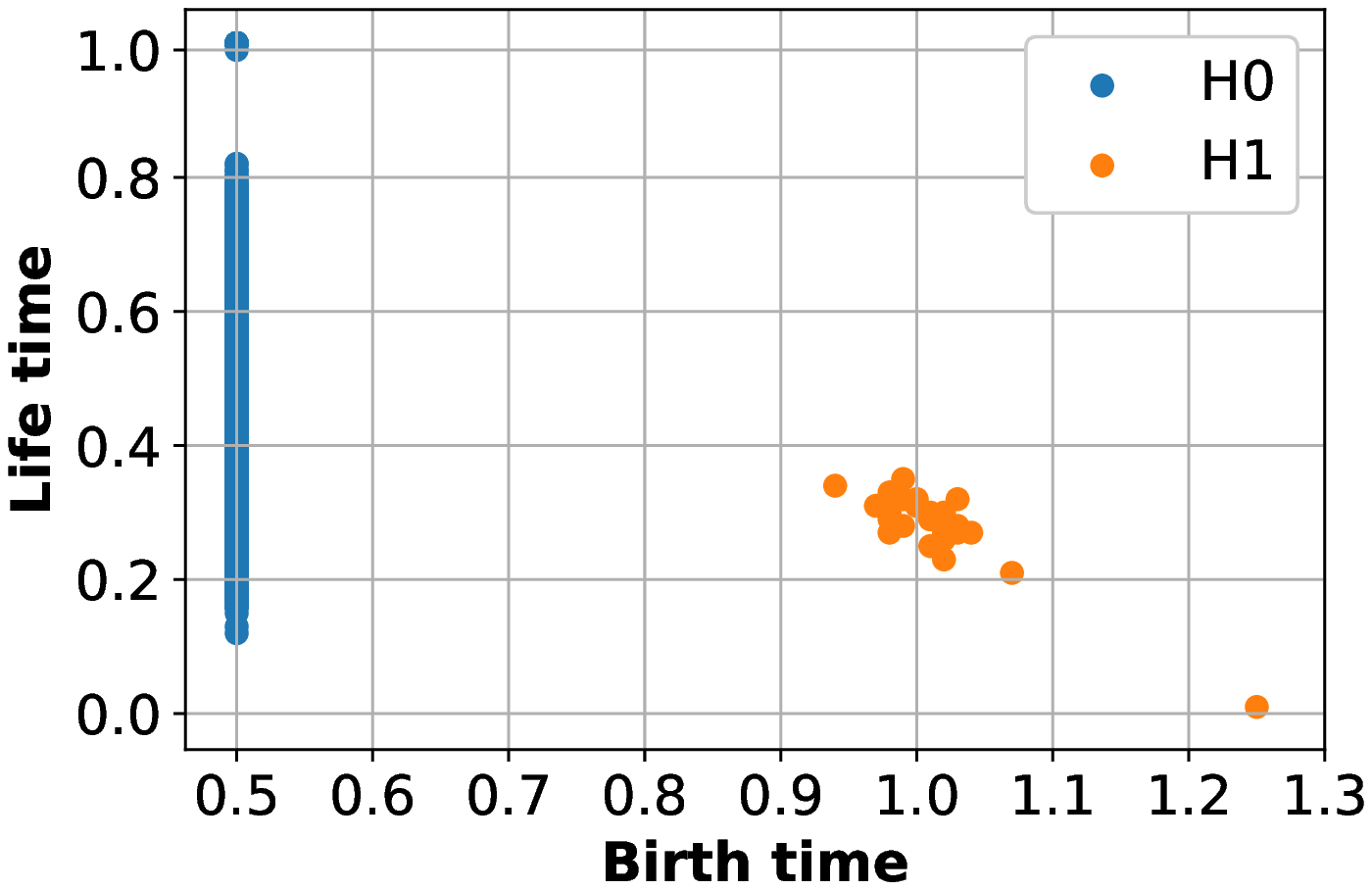}
  \centerline{\footnotesize{(c)}}\medskip
\end{minipage}
\begin{minipage}[c]{0.24\linewidth}
  \centering
  \includegraphics[width=3.4cm]{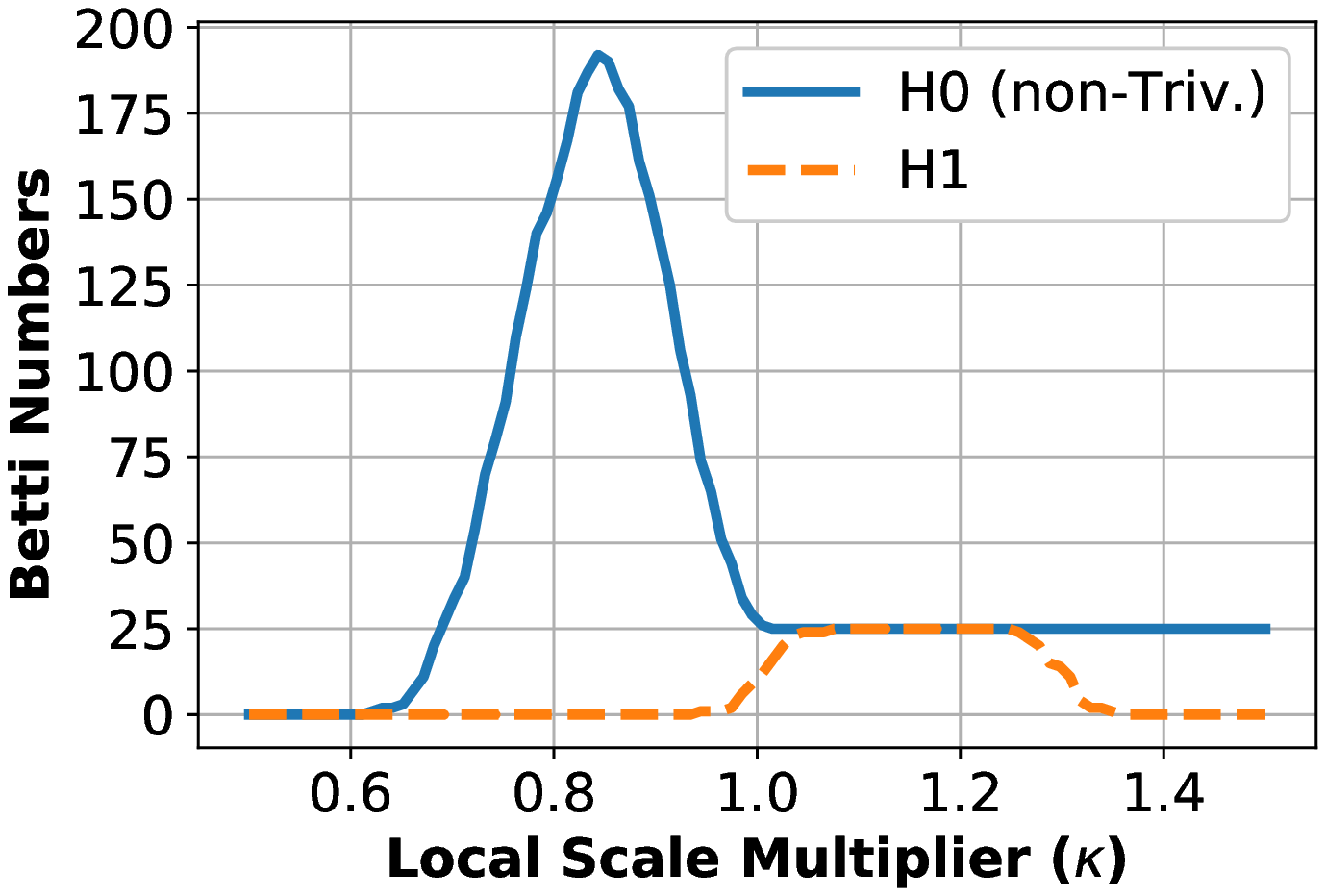}
  \centerline{\footnotesize{(d)}}\medskip
\end{minipage}
\caption{(a) Persistence diagram and (b) Betti numbers as a function of scale using P-LVR, and (c) persistence diagram and (d) Betti numbers using LS-LVR.}
\label{fig:pers_25circles}
\end{figure}

\begin{figure}
\begin{minipage}[c]{0.32\linewidth}
  \centering
  \includegraphics[width=5cm]{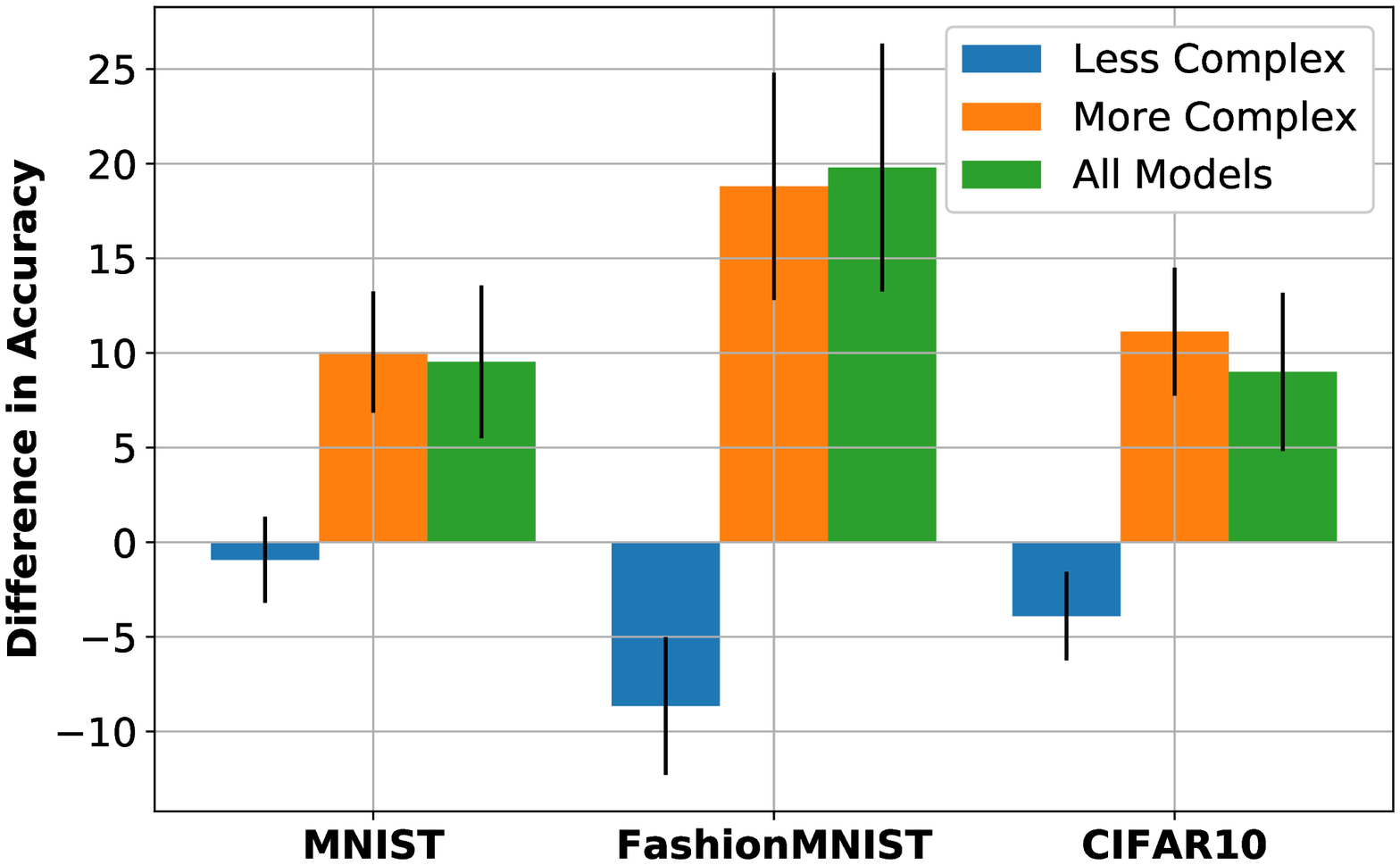}
  \centerline{\footnotesize{(a)}}\medskip
\end{minipage}
\begin{minipage}[c]{0.32\linewidth}
  \centering
  \includegraphics[width=5cm]{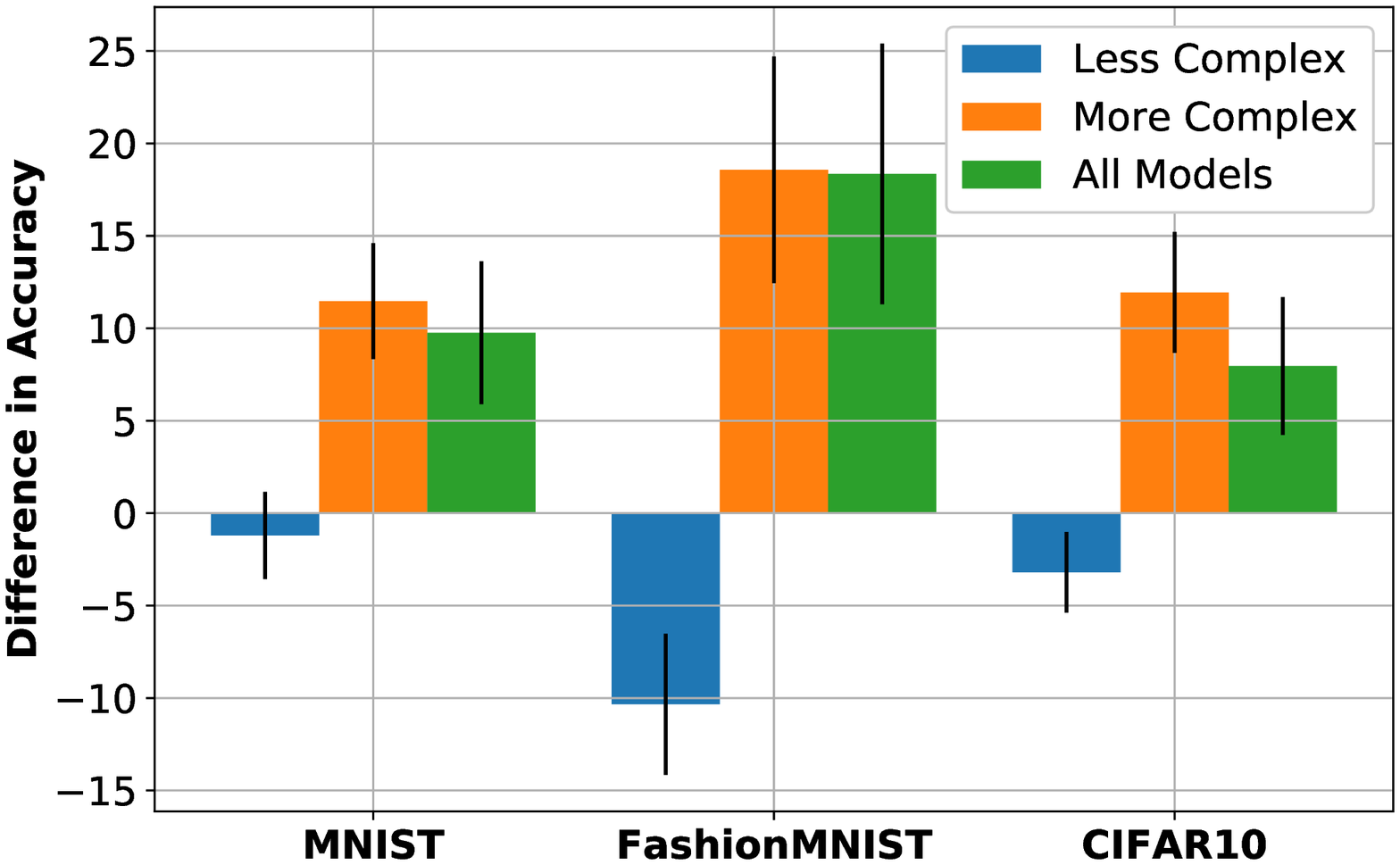}
  \centerline{\footnotesize{(b)}}\medskip
\end{minipage}
\begin{minipage}[c]{0.32\linewidth}
  \centering
  \includegraphics[width=5cm]{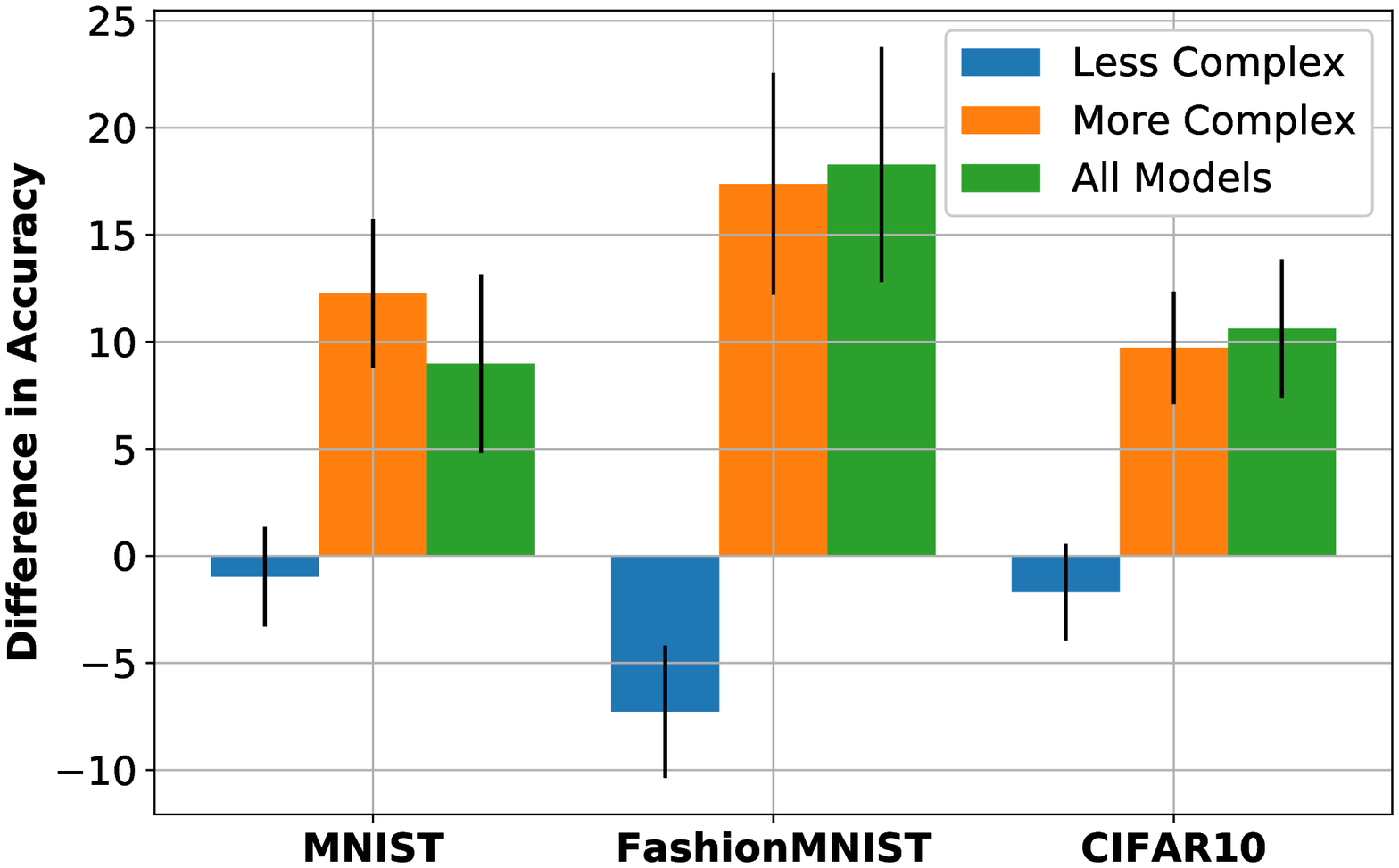}
  \centerline{\footnotesize{(c)}}\medskip
\end{minipage}
\caption{Accuracy improvement or reduction in choosing pre-trained classifiers with topological complexity close to the dataset versus complexity far from the dataset. Complexity measures used: (a) Sum of total lifetimes of $H_0$ and $H_1$ groups, (b) Total lifetimes of $H_0$ groups, (c) Total lifetimes of $H_1$ groups. Blue bars show the accuracy difference when using only pre-trained classifiers with less topological complexity than the dataset, orange bars correspond to those with greater complexity, and green bars correspond to using all pre-trained classifiers. The black lines show the $95\%$ confidence interval.}
\label{fig:topo_comp_acc}
\end{figure}

We demonstrate how topological complexity can be used to guide selecting appropriate pre-trained models for a new dataset. We use only LS-LVR complexes for estimating topological complexities. We consider three application domains for our evaluation: MNIST \cite{mnistlecun}, FashionMNIST \cite{xiao2017fmnist} and CIFAR10 \cite{krizhevsky2009learning}. FashionMNIST is a drop-in replacement for MNIST with the same image sizes and train-test splits. 

All three applications have $10$ classes and $50,000$ training and $10,000$ test images. Each instance of MNIST and FashionMNIST is a $28 \times 28$ grayscale image, whereas each instance of CIFAR10 is a $32 \times 32$ color image. We construct $\binom{10}{2} = 45$ binary classification datasets from each application domain, one for each combination of two classes.  We then train individual binary classifiers for these $45$ datasets per application using the standard CNN architecture provided in \url{https://github.com/pytorch/examples/tree/master/mnist} for MNIST and FashionMNIST, and the VGG CNN  - configuration D for CIFAR10 \cite{simonyan2014very}. 

Given a trained model $f_i(\cdot)$, $i=1,\ldots,45$, we evaluate its topological complexity using the test data inputs and predicted labels. This labeled dataset is represented as $\hat{Z}_i = \{(\hat{z}_{i,1}, f_i(\hat{z}_{i,1})), \ldots, (\hat{z}_{i,n_i}, f_i(\hat{z}_{i,n_i}))\}$, and its Betti numbers for $H_0$ and $H_1$ at scale $\kappa$ are given as $\beta_{0, \kappa}(i), \beta_{1, \kappa}(i)$. The complexity of a novel dataset is estimated using its inputs and true labels, using three different measures. The first is total lifetime $H_0$ groups given by $\sum_{\kappa}\beta_{0, \kappa}$, the second is total lifetime $H_1$ groups, $\sum_{\kappa}\beta_{1, \kappa}$, and the third is sum of the two, $\sum_{\kappa}\beta_{0, \kappa} + \beta_{1, \kappa}$. Although these are natural measures of topological complexity, one can consider other reasonable variants as well. 

Let us consider an example of matching a novel dataset to a pre-trained model.  Our novel dataset is MNIST handwritten digit 0 vs. handwritten digit 4, whose $\beta_0 + \beta_1$ data complexity we compute to be 479.  Then we look for pre-trained model complexities that are similar.  Not surprisingly, the closest is the pre-trained model 0 vs. 4, which has a model complexity 479.  The 0 vs. 9 pre-trained model has a similar complexity of 525.  If we select the 0 vs. 4 model, we achieve 99.95\% accuracy on 0 vs. 4 data, and if we select the 0 vs. 9 model, we also achieve a high accuracy of 96.08\%.  If we select a model that is not well-matched to the data complexity, for example the 0 vs. 5 model with complexity 1058, we achieve a low accuracy on 0 vs. 4 data of 63.41\%. 

All data and model complexities are listed in the appendix. For MNIST, FashionMNIST and CIFAR10, the average binary classification accuracy of the best performing models is $99.61\%$, $98.39\%$, and $96.78\%$ respectively. Now let us conduct an experiment to see whether the example above holds in general.  Treating each of the 45 datasets as the novel dataset, we select $5$ pre-trained models that are the closest and $5$ models that are the farthest in topological complexity. We evaluate these classifiers on the novel dataset and obtain the average difference in classification accuracy between the closest and farthest classifiers. If the difference in accuracy is significantly greater than zero, it means that using classifiers that have similar topological complexity as the dataset is beneficial. If the difference in accuracy is close to zero, it shows that there is no benefit in using topological complexity to guide the choice of the classifier. If it is significantly less than zero, it means that classifiers which do not have similar topological complexity are better suited for the novel dataset.

Armed with this intuition, we can interpret Figure \ref{fig:topo_comp_acc}. The green bars show the average accuracy difference obtained by repeating the above experiment on the $45$ two-class datasets in each of CIFAR10, MNIST and FashionMNIST. The black lines show the $95\%$ confidence interval. If the black line is completely above (below) $0$, with a $p$-value less than $0.05$, the null hypothesis that the accuracy difference is less than or equal to (greater than or equal to) $0$ can be rejected. If the black line intersects $0$, we cannot reject the null hypothesis that the accuracy difference is $0$, at a significance level of $0.05$.

From the green bars, we see that classifiers with similar topological complexity have higher performance on the novel dataset for all three complexity measures. We then divide the pre-trained classifiers into two groups: those that have lower topological complexity than the novel dataset, and those that have higher topological complexity. Results for classifiers with higher topological complexity are shown using orange bars, and the previous claim still holds. For classifiers with lower complexity, the results show a different trend. Note that in this case, the farthest classifiers have less complexity than the closest ones. For MNIST, there is no significant change in accuracy when choosing any classifier that has lower complexity than the dataset. For CIFAR10, there is a small improvement in accuracy when choosing classifiers that have much lower complexity than the dataset, and for FashionMNIST, this improvement is a little higher. For these three application domains, we observe that choosing classifiers with lower complexities than data is slightly favorable or neutral.

\section{Conclusion}
\label{sec:concl}

In this paper, we have investigated the use of topological data analysis in the study of labeled point clouds of data encountered in supervised classification.  In contrast to \cite{GussS2018}, which simply applies known, standard, persistent homology inference methods to different classes of data separately and does not scale to high dimensions, we introduce new techniques and constructions for characterizing \emph{decision boundaries} and apply them to several commonly used datasets in deep learning.  We propose and theoretically analyze the \emph{labeled} \v{C}ech complex, deriving conditions on recovering the decision boundary's homology with high probability based on the number of samples and the condition number of the decision boundary manifold.  

Furthermore, we have proposed the computationally-tractable \emph{labeled} Vietoris-Rips complex and extended it to account for variation in the local scaling of data across a feature space.  We have used this complex to provide a complexity quantification of pre-trained models and datasets that is able to correctly identify the complexity level below which a pre-trained model will suffer in its ability to generalize to a given dataset.  This use has increasing relevance as model marketplaces become the norm.

\appendix

\section{Background on Persistent Homology for Unlabeled Data}
\label{sec:persisthom}
Consider a set of $n$ data points in $\mathbb{R}^d$: $\mathcal{Z} = \{\mathbf{z}_1,\ldots,\mathbf{z}_n\}$.  A set of points by itself has no shape per se, but if the points are viewed as samples from some shape, then the set of points reveals the underlying shape.  We would like to estimate and approximate the topology of that shape by constructing a simplicial complex from the points and examining the topology of the simplicial complex.  A zero-dimensional simplex is a point, a one-dimensional simplex is a line segment, a two-dimensional simplex is a triangle, a three-dimensional simplex is a tetrahedron, and so on; a simplicial complex is a set of simplices glued together in a particular way.  Specifically, a simplicial complex $\mathcal{S}$ = $(\mathcal{Z},\Sigma)$, where $\Sigma$ is a family of non-empty subsets of $\mathcal{Z}$ such that each subset $\sigma \in \Sigma$ is a simplex.  Furthermore, the following condition must also hold: $\sigma \in \Sigma$ and $\tau \subseteq \sigma$ implies that $\tau \in \Sigma$.  In forming these non-empty subsets of points that form a simplex, we only consider subsets of points that are close to each other.  There are various notions of closeness that we come back to later in this section.

Topology, being the study of shape, is primarily concerned with the number of connected components and the number and dimension of holes that an object has.  The Betti numbers characterize the connectivity as follows.  The zeroth Betti number $\beta_0$ is the number of connected components, the first Betti number $\beta_1$ is the number of one-dimensional holes or circles, the second Betti number $\beta_2$ is the number of two-dimensional voids or cavities, and so on.  For example, a torus or inner tube has $\beta_0 = 1$ because it is just one component, $\beta_1 = 2$ because of the main hole through the middle and the hole formed when looking at a cross-section, and $\beta_2 = 1$ because of the cavity of the inner tube.  Betti numbers for simplicial complexes are defined in the same way.  Formally, $\beta_k(\mathcal{S})$ is the dimension of the $k$th homology group of the complex $H_k(\mathcal{S})$ \cite{Carlsson2009}.

Various approaches exist for constructing simplicial complexes from $\mathcal{Z}$. All of these depend on a scale parameter $\epsilon$ (also referred to as \emph{time}) which specifies the extent of closeness of points. In the \v{C}ech complex \v{C}ech$(\mathcal{Z},\epsilon)$, a simplex is created between a set of points $\mathcal{G}$ if and only if there is a non-empty intersection of the closed Euclidean balls $B(\mathbf{z}_i,\epsilon/2)$, $\forall i \in \mathcal{G}$. In the Vietoris-Rips (VR) complex, VR$(\mathcal{Z},\epsilon)$, a simplex is created if and only if the Euclidean distance between every pair of points is less than $\epsilon$. Efficient construction of the VR complex can proceed by  creating an $\epsilon-$neighborhood graph, also referred to as the \emph{one-skeleton} of $\mathcal{S}$.  Then inductively, triplets of edges that form a triangle are taken as two-dimensional simplices, sets of four two-dimensional simplices that form a tetrahedron are taken as three-dimensional simplices, and so on.

Homological inference depends on the scale parameter (time) at which the complexes are constructed. The topological features of the simplicial complex $\mathcal{S}$ constructed from the data points $\mathcal{Z}$ that are stable across scales, i.e.~that are \textit{persistent}, are the ones that provide information about the underlying shape.  Topological features that do not persist are noise.  \emph{Persistence diagrams} are representations of the birth and death times of each homology cycle corresponding to each homology group $H_k$, $k = 0, 1, \ldots$, i.e.\ for increasing values of the scale parameter, the $\epsilon$ value at which a topological feature begins to exist and ceases to exist.  

\begin{figure}
\begin{minipage}[c]{0.32\linewidth}
  \centering
  \includegraphics[width=2.8cm]{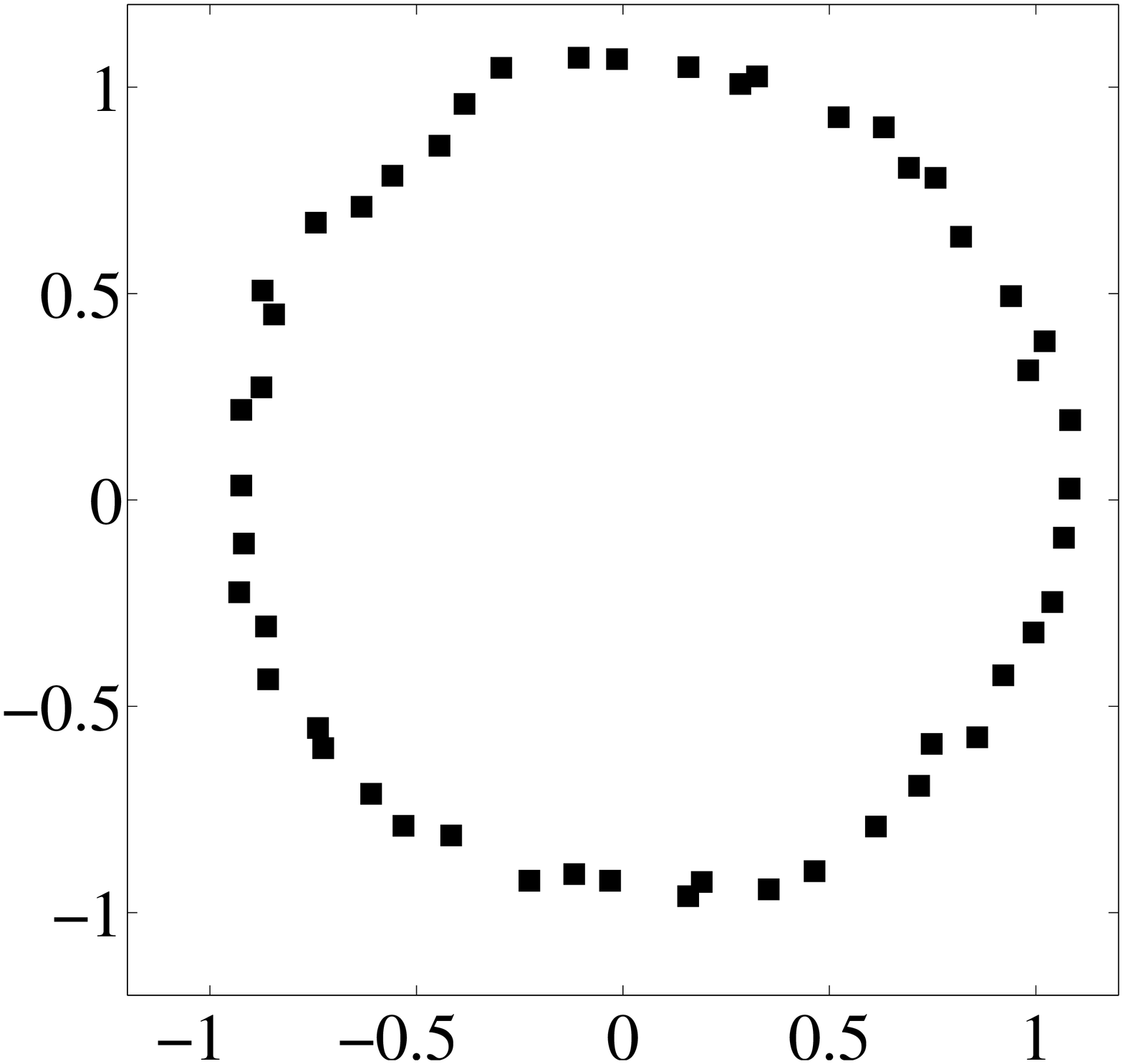}
  \centerline{\footnotesize{(a)}}\medskip
\end{minipage}
\begin{minipage}[c]{0.32\linewidth}
  \centering
  \includegraphics[width=2.8cm]{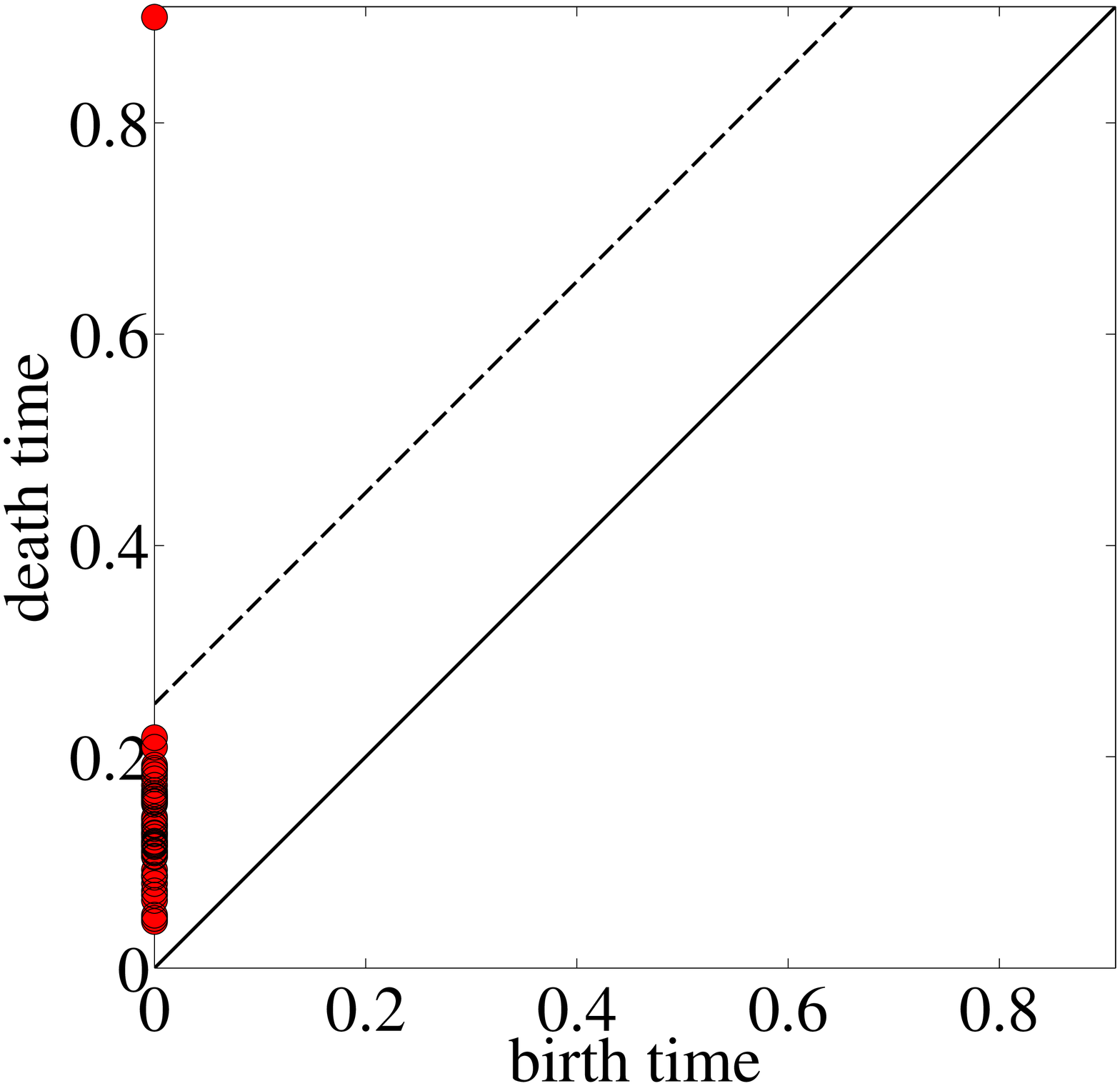}
  \centerline{\footnotesize{(b)}}\medskip
\end{minipage}
\begin{minipage}[c]{0.32\linewidth}
  \centering
  \includegraphics[width=2.8cm]{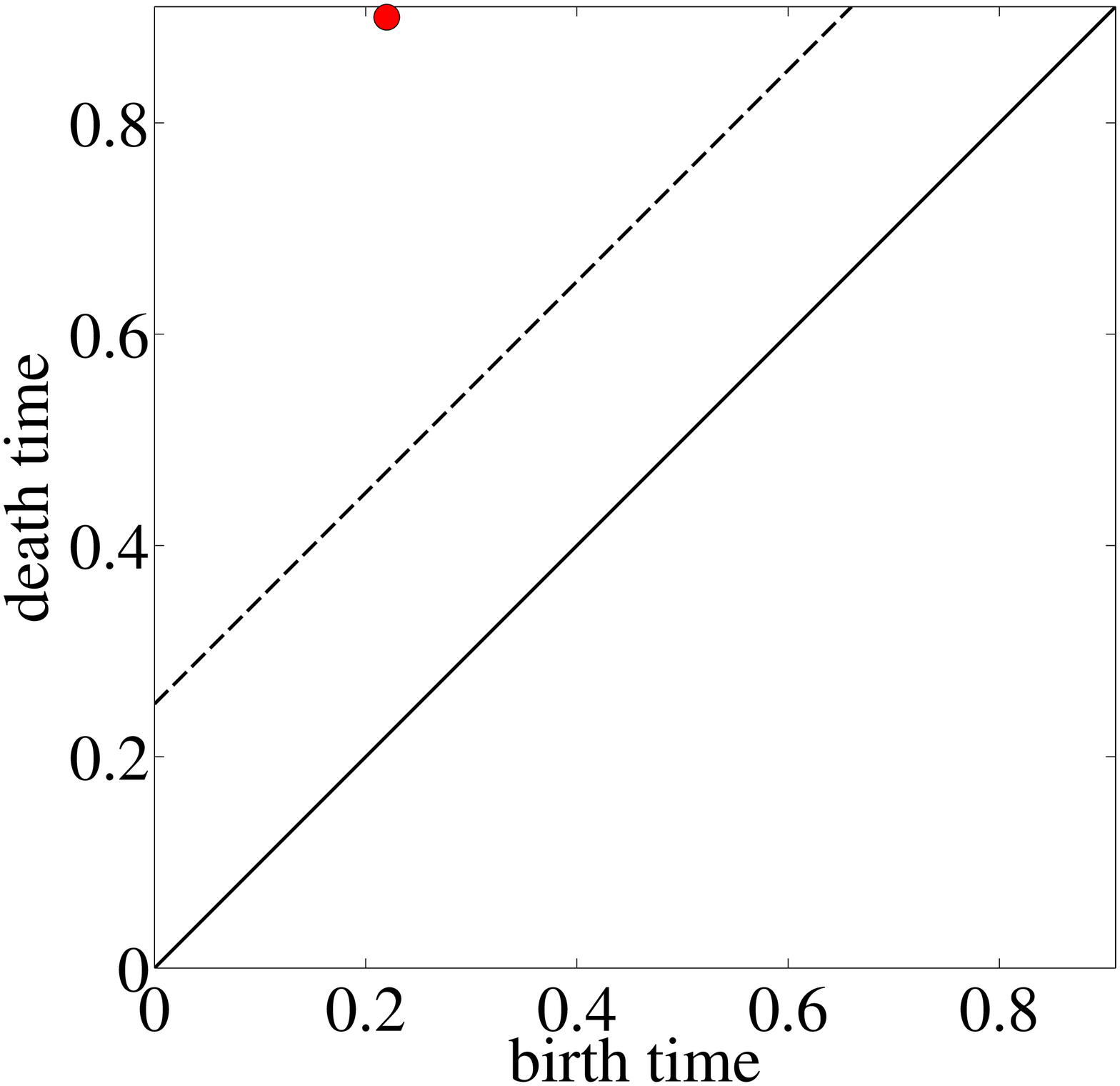}
  \centerline{\footnotesize{(c)}}\medskip
\end{minipage}
  \caption{(a) Noisy data samples from an underlying circle in 2-dimensional space, (b) persistence diagram for $H_0$, (c) persistence diagram for $H_1$.}
  \label{fig:pers_circle}
\end{figure}
As an example, let us consider the point cloud $\mathcal{Z}$ shown in Figure \ref{fig:pers_circle}(a), with noisy samples drawn from a circle, which has Betti numbers $\beta_0 = 1$, $\beta_1 = 1$, and $\beta_k = 0$ for $k > 1$.  At the value $\epsilon = 0$, the simplicial complex that is formed from $\mathcal{Z}$ is a collection of all the individual points not connected to any other point, resulting in the birth of $n$ topological features in the $H_0$ persistence diagram shown in Figure \ref{fig:pers_circle}(b).  As the scale increases, all of these little features die and only one persists until the largest scale under consideration; thus the persistent $\beta_0 = 1$.  Looking at the $H_1$ persistence diagram in Figure \ref{fig:pers_circle}(c), we see that the only feature that is born persists until the largest scale and thus the persistent $\beta_1 = 1$.  It is born at approximately a scale parameter of 0.2, which is when all of the points have been connected into a ring in the simplicial complex.

\section{Constructing Higher Order Simplices from Bipartite Graphs}
\label{sec:higher_order_simplices}

\begin{figure}
\begin{minipage}[c]{0.49\linewidth}
  \centering
  \includegraphics[width=3.5cm]{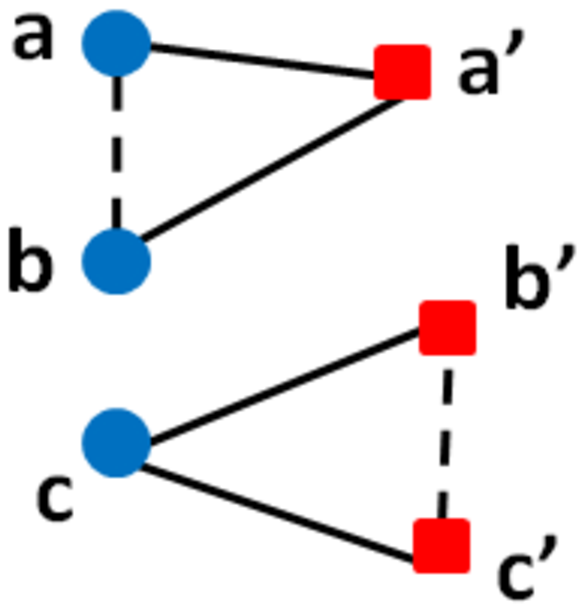}
  \centerline{\footnotesize{(a)}}\medskip
\end{minipage}
\begin{minipage}[c]{0.49\linewidth}
  \centering
  \includegraphics[width=3.5cm]{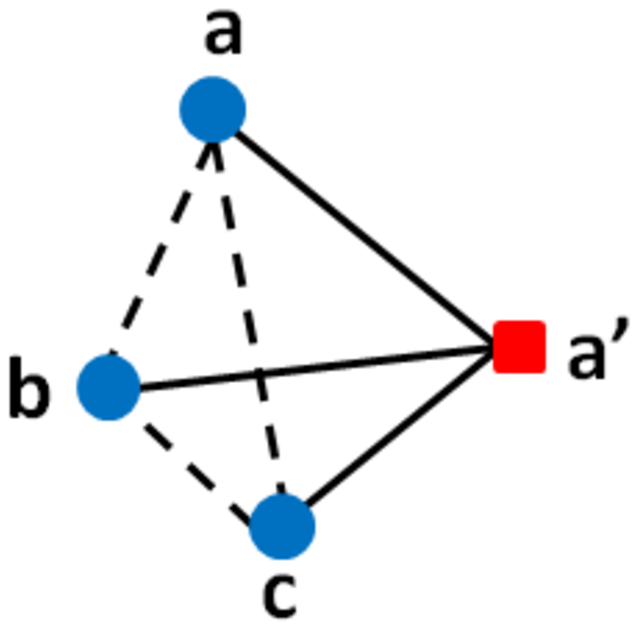}
  \centerline{\footnotesize{(b)}}\medskip
\end{minipage}
  \caption{(a) A simplicial complex with two 2-simplices from a bipartite graph between circle and square classes generated using length-2 walks (dotted lines), (b) a complex created with one 3-simplex using the same approach.}
  \label{fig:graphwalk}
\end{figure}
Consider the examples in Figure \ref{fig:graphwalk}. In the first example, we start with three points in a two-dimensional space where all points are within $\epsilon$ of each other.  Two share a class label and are thus not initially connected by an edge.  The initial graph $\mathbf{A}$ has two line segment simplices.  After including the graph walk, an intraclass edge is introduced.  Now $\tilde{\mathbf{A}}$ has a triangle simplex.  The second example is similar, but has four points in three-dimensional space, with three of the four points sharing a class label.  Here we form a tetrahedron after introducing the length two graph walk edges.

\section{Necessity of Decision Boundary Topology}
\label{sec:db_topo_necessity}
When understanding the decision surfaces of labeled data, using the approach presented in our paper is more accurate than using the unlabeled data topology for either of the classes as pursued in \cite{BianchiniS2014, GussS2018}. We use a simple counter-example in Figure \ref{fig:counter_example} to demonstrate this. Clearly, neither of the classes reflect the true topology of the decision boundary.

\begin{figure}
  \centering
  \includegraphics[width=6cm]{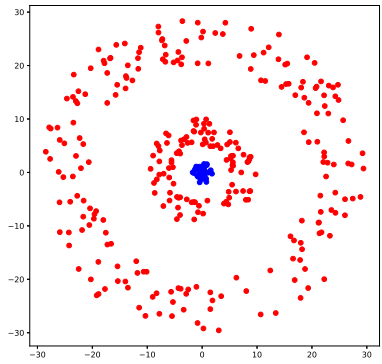}
  \caption{The \textit{blue} class has Betti numbers $\beta_0 = 1, \beta_1 = 0$. The \textit{red} class has Betti numbers $\beta_0 = 2, \beta_1 = 2$. Neither of them reflect the true topology of the decision boundary which has Betti numbers $\beta_0 = 1, \beta_1 = 1$}
  \label{fig:counter_example}
\end{figure}

\section{Implementation Notes}
\label{sec:impl_notes}
We adopt several approaches to make our implementations efficient. We will provide describe them briefly:
\begin{itemize}
\item We use $\epsilon-$neighborhood graphs to compute the LVR complexes, but to limit the number of simplices, we restrict the number of nearest neighbors for any point to $20$. We then symmetrize the graph and use it for obtaining the P-LVR and LS-LVR constructions.
\item The neighborhood graphs are computed efficiently using Cython code interfaced to the main Python package that we developed.
\item We estimate the distance matrices for the LVR constructions and use the efficient Ripser (\url{https://github.com/Ripser/ripser}) to obtain the persistence diagrams.
\item The entire pipeline (neighborhood graph construction and LVR estimation) to estimate the Betti numbers $\beta_0$ and $\beta_1$ for two classes runs in less than $1$ minute for about $1000$ points per class (the standard size of our test datasets). The program runs in a single core using less than 500MB of RAM in a standard computer.
\end{itemize}

\section{Decision Boundary Complexes}
We will display the decision boundary complexes corresponding to the demonstration in Section \ref{sec:demo_homology_rec}. The local scale multipliers for the LS-LVR filtration are varied between $0.5$ and $1.5$ in 100 increments. For the P-LVR filtration, the scale parameters are varied between $0$ and $10$ in 100 increments. We only show $20$ complexes for each filtration in evenly spaced increments.

\begin{figure}[h]
\begin{minipage}[c]{0.19\linewidth}
  \centering
  \includegraphics[width=2.8cm]{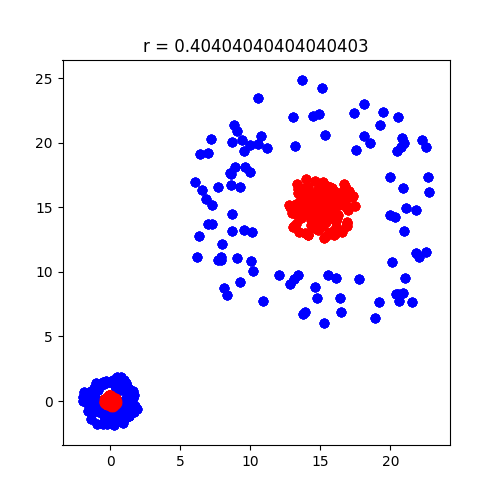}
\end{minipage}
\begin{minipage}[c]{0.19\linewidth}
  \centering
  \includegraphics[width=2.8cm]{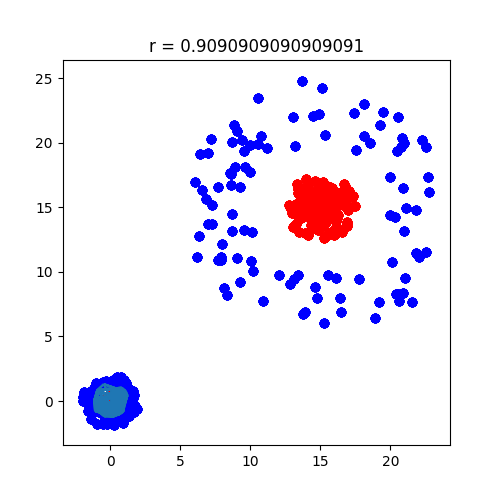}
\end{minipage}
\begin{minipage}[c]{0.19\linewidth}
  \centering
  \includegraphics[width=2.8cm]{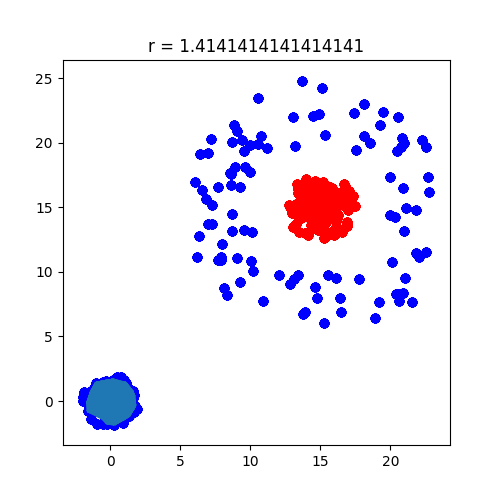}
\end{minipage}
\begin{minipage}[c]{0.19\linewidth}
  \centering
  \includegraphics[width=2.8cm]{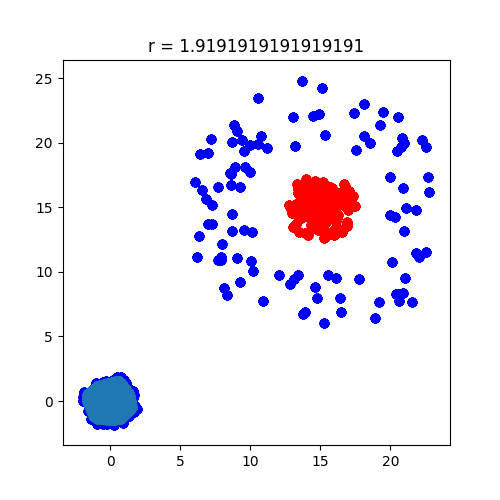}
\end{minipage}
\begin{minipage}[c]{0.19\linewidth}
  \centering
  \includegraphics[width=2.8cm]{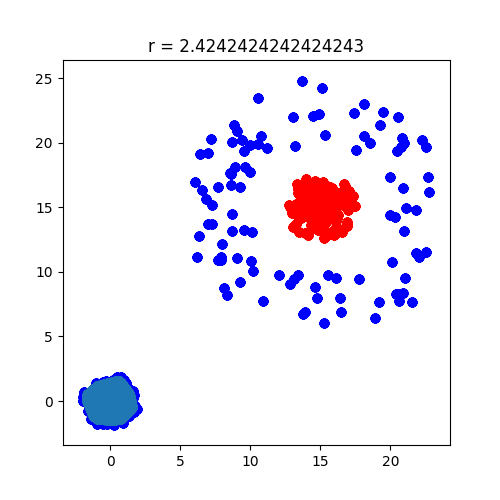}
\end{minipage}

\begin{minipage}[c]{0.19\linewidth}
  \centering
  \includegraphics[width=2.8cm]{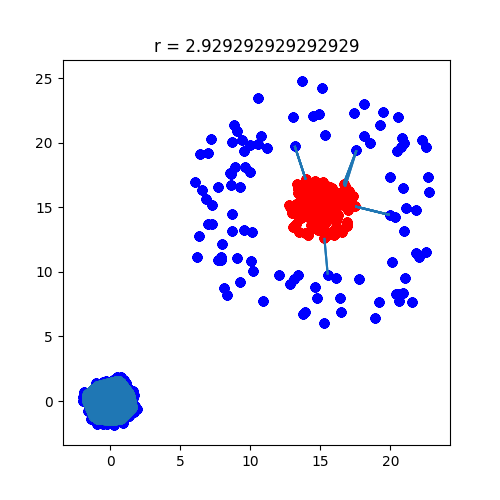}
\end{minipage}
\begin{minipage}[c]{0.19\linewidth}
  \centering
  \includegraphics[width=2.8cm]{Figures/eps_complexes_25.png}
\end{minipage}
\begin{minipage}[c]{0.19\linewidth}
  \centering
  \includegraphics[width=2.8cm]{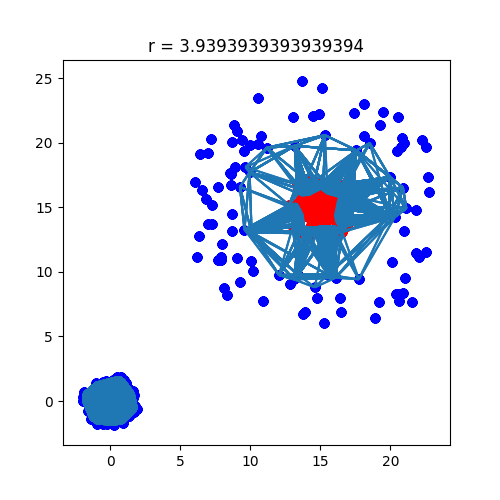}
\end{minipage}
\begin{minipage}[c]{0.19\linewidth}
  \centering
  \includegraphics[width=2.8cm]{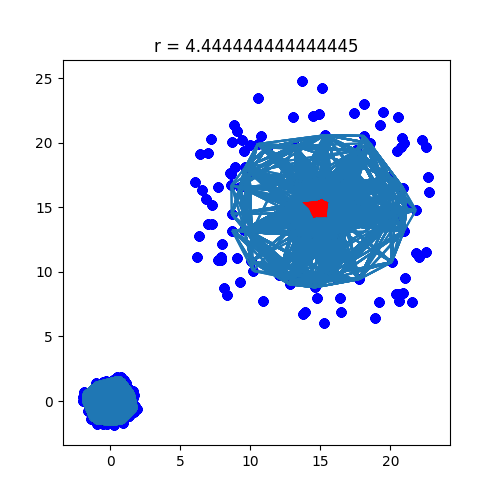}
\end{minipage}
\begin{minipage}[c]{0.19\linewidth}
  \centering
  \includegraphics[width=2.8cm]{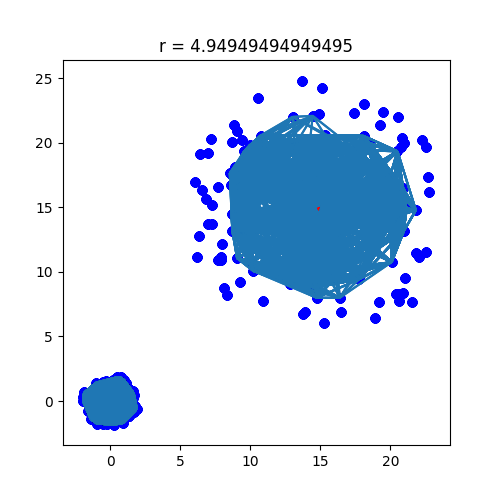}
\end{minipage}

\begin{minipage}[c]{0.19\linewidth}
  \centering
  \includegraphics[width=2.8cm]{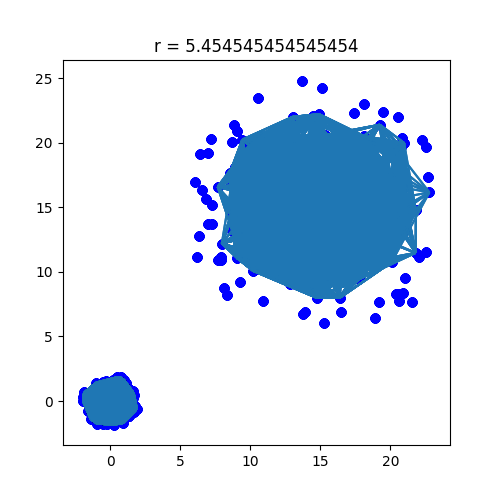}
\end{minipage}
\begin{minipage}[c]{0.19\linewidth}
  \centering
  \includegraphics[width=2.8cm]{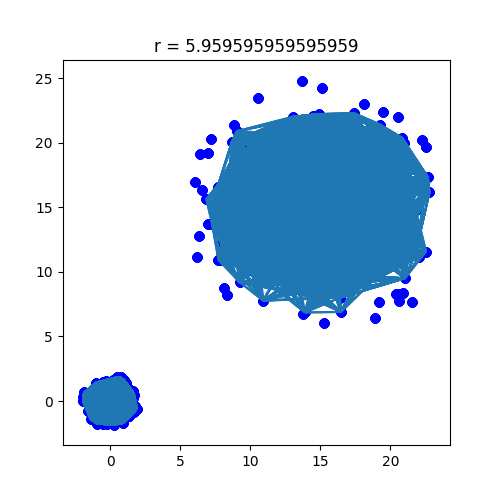}
\end{minipage}
\begin{minipage}[c]{0.19\linewidth}
  \centering
  \includegraphics[width=2.8cm]{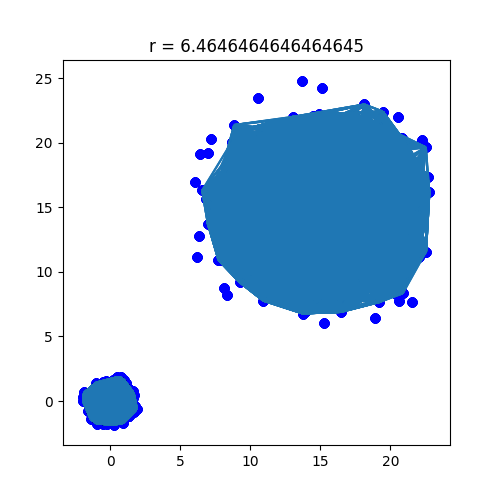}
\end{minipage}
\begin{minipage}[c]{0.19\linewidth}
  \centering
  \includegraphics[width=2.8cm]{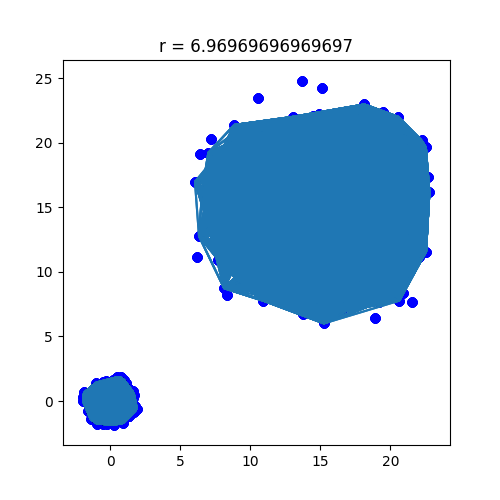}
\end{minipage}
\begin{minipage}[c]{0.19\linewidth}
  \centering
  \includegraphics[width=2.8cm]{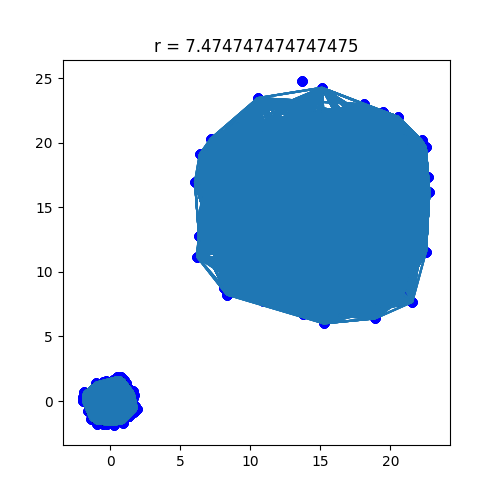}
\end{minipage}

\begin{minipage}[c]{0.19\linewidth}
  \centering
  \includegraphics[width=2.8cm]{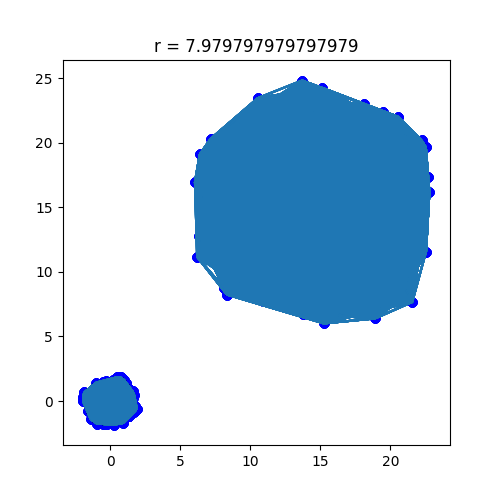}
\end{minipage}
\begin{minipage}[c]{0.19\linewidth}
  \centering
  \includegraphics[width=2.8cm]{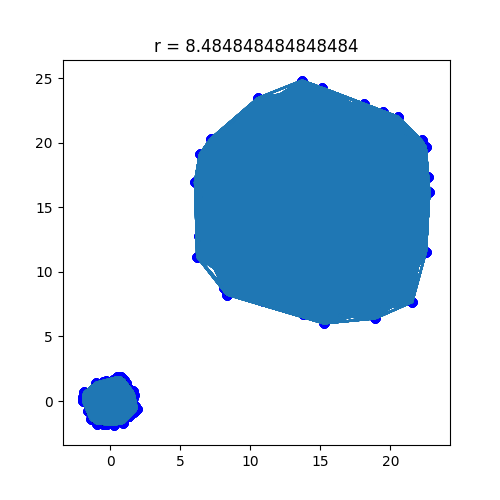}
\end{minipage}
\begin{minipage}[c]{0.19\linewidth}
  \centering
  \includegraphics[width=2.8cm]{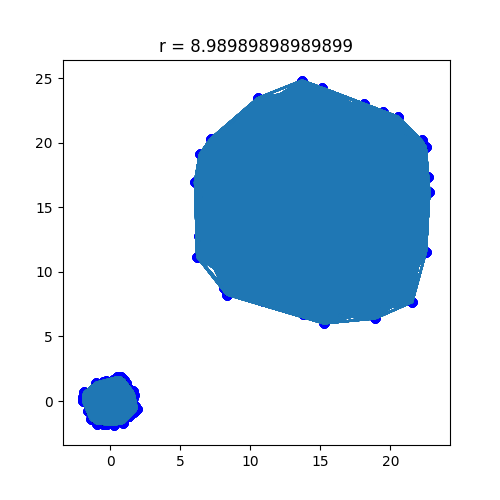}
\end{minipage}
\begin{minipage}[c]{0.19\linewidth}
  \centering
  \includegraphics[width=2.8cm]{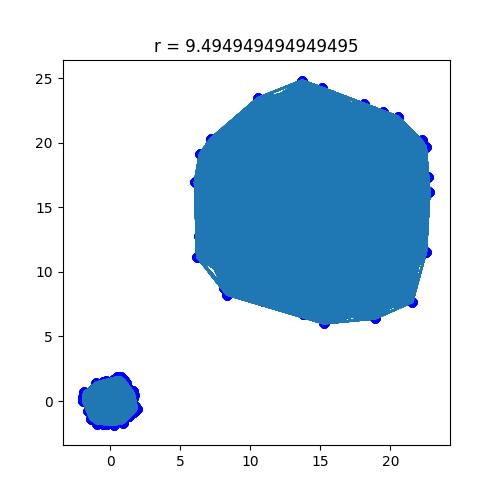}
\end{minipage}
\begin{minipage}[c]{0.19\linewidth}
  \centering
  \includegraphics[width=2.8cm]{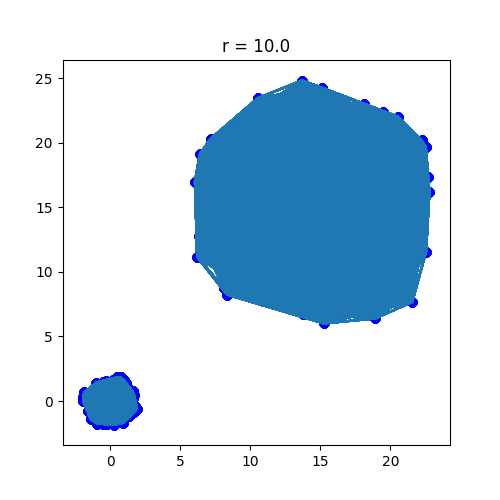}
\end{minipage}
  \caption{P-LVR complexes - the scales are given in the title of each image.}
  \label{fig:P_LVR_eps}
\end{figure}

\clearpage

\begin{figure}[h]
\begin{minipage}[c]{0.19\linewidth}
  \centering
  \includegraphics[width=2.8cm]{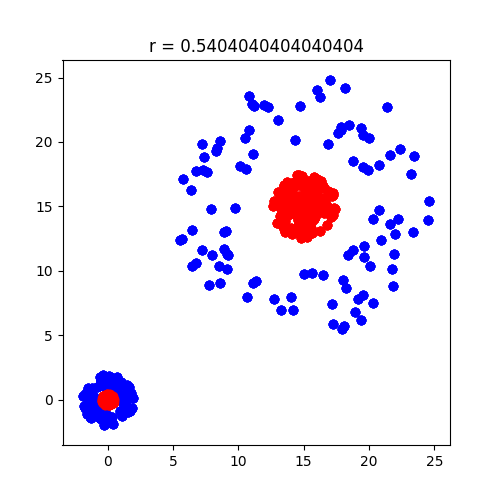}
\end{minipage}
\begin{minipage}[c]{0.19\linewidth}
  \centering
  \includegraphics[width=2.8cm]{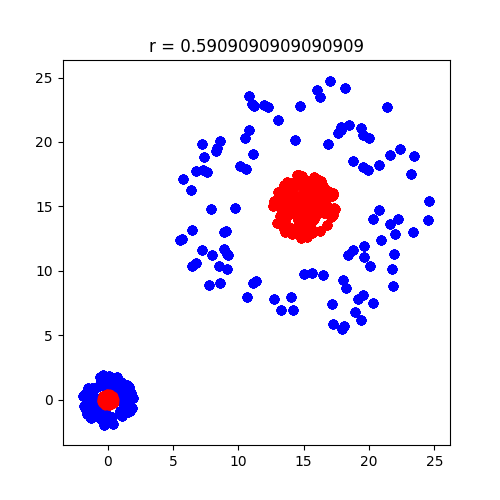}
\end{minipage}
\begin{minipage}[c]{0.19\linewidth}
  \centering
  \includegraphics[width=2.8cm]{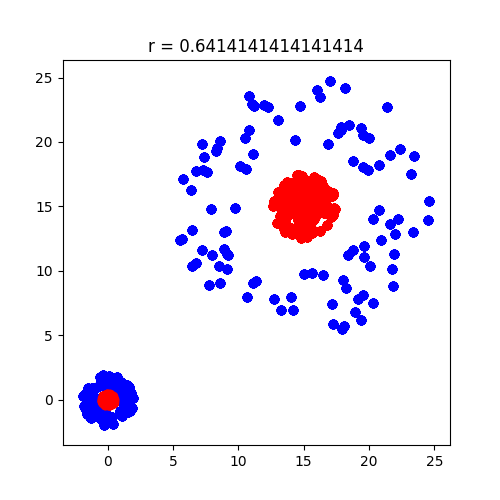}
\end{minipage}
\begin{minipage}[c]{0.19\linewidth}
  \centering
  \includegraphics[width=2.8cm]{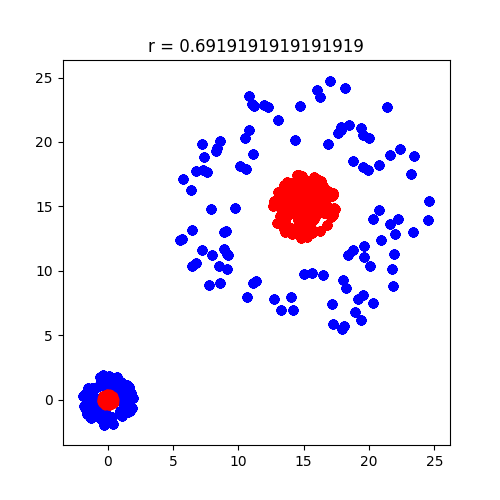}
\end{minipage}
\begin{minipage}[c]{0.19\linewidth}
  \centering
  \includegraphics[width=2.8cm]{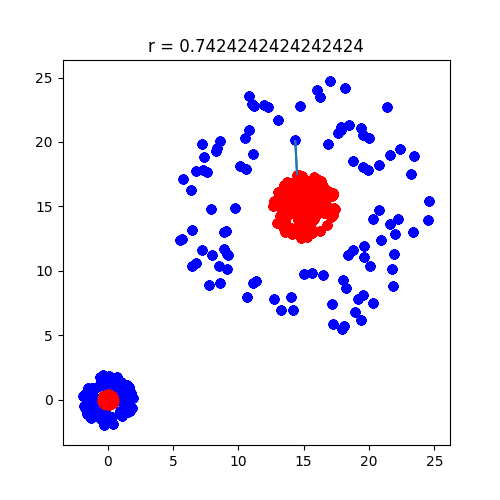}
\end{minipage}

\begin{minipage}[c]{0.19\linewidth}
  \centering
  \includegraphics[width=2.8cm]{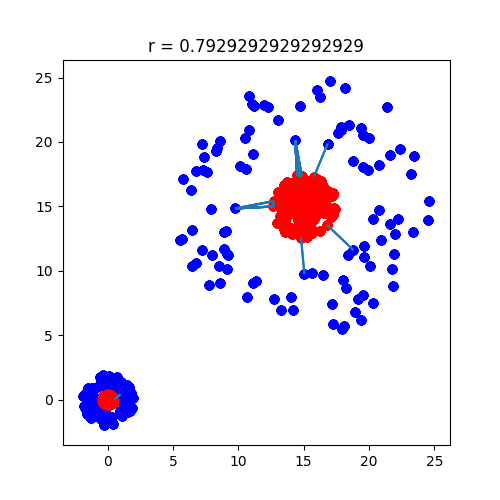}
\end{minipage}
\begin{minipage}[c]{0.19\linewidth}
  \centering
  \includegraphics[width=2.8cm]{Figures/knn_rho_complexes_25.png}
\end{minipage}
\begin{minipage}[c]{0.19\linewidth}
  \centering
  \includegraphics[width=2.8cm]{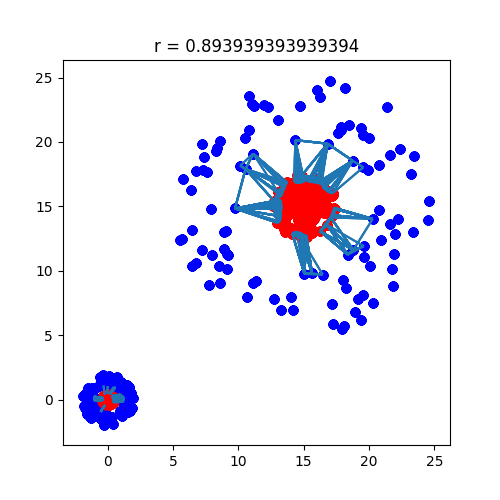}
\end{minipage}
\begin{minipage}[c]{0.19\linewidth}
  \centering
  \includegraphics[width=2.8cm]{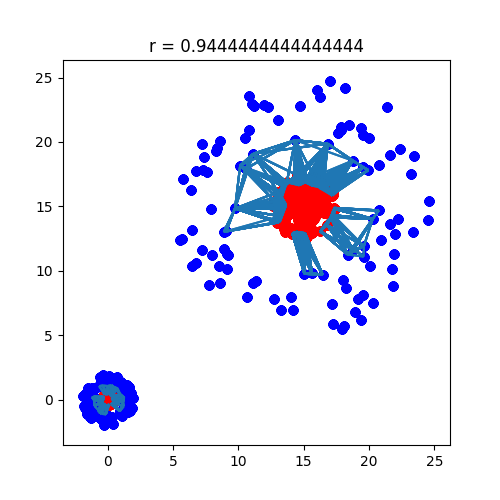}
\end{minipage}
\begin{minipage}[c]{0.19\linewidth}
  \centering
  \includegraphics[width=2.8cm]{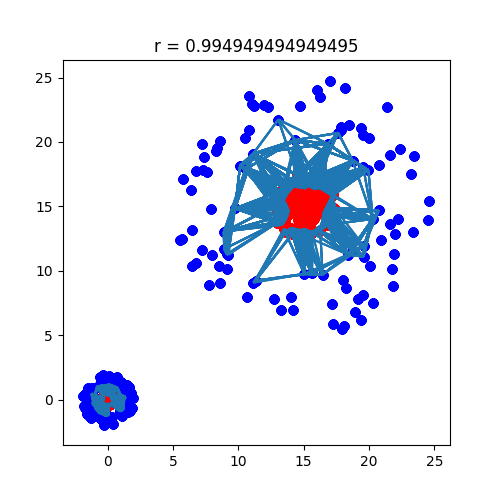}
\end{minipage}

\begin{minipage}[c]{0.19\linewidth}
  \centering
  \includegraphics[width=2.8cm]{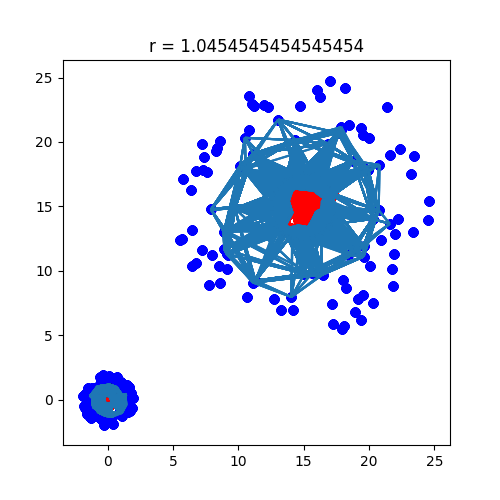}
\end{minipage}
\begin{minipage}[c]{0.19\linewidth}
  \centering
  \includegraphics[width=2.8cm]{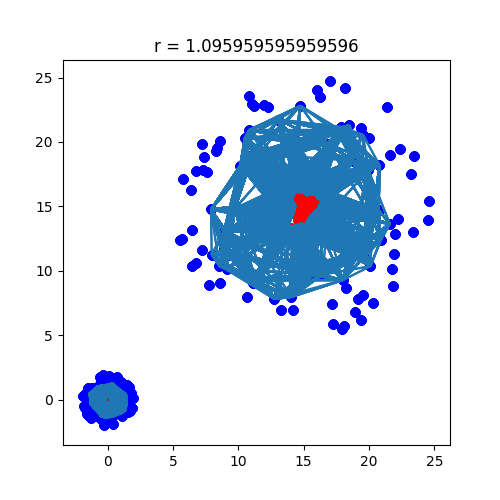}
\end{minipage}
\begin{minipage}[c]{0.19\linewidth}
  \centering
  \includegraphics[width=2.8cm]{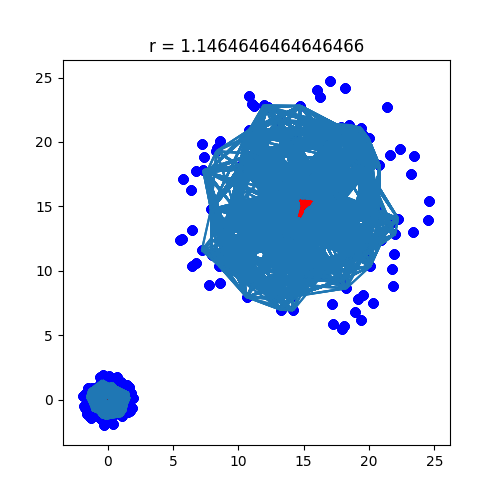}
\end{minipage}
\begin{minipage}[c]{0.19\linewidth}
  \centering
  \includegraphics[width=2.8cm]{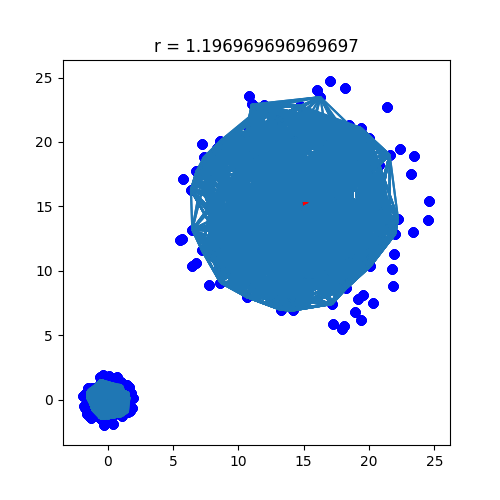}
\end{minipage}
\begin{minipage}[c]{0.19\linewidth}
  \centering
  \includegraphics[width=2.8cm]{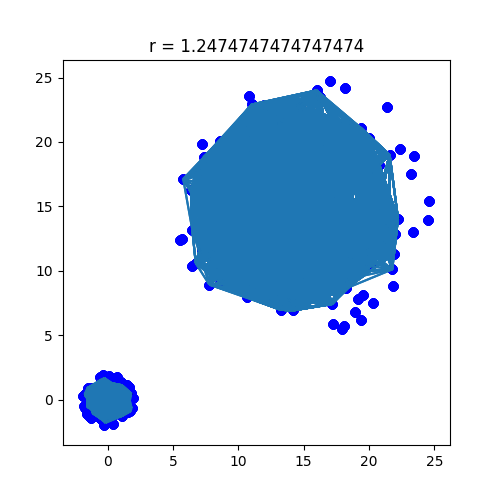}
\end{minipage}

\begin{minipage}[c]{0.19\linewidth}
  \centering
  \includegraphics[width=2.8cm]{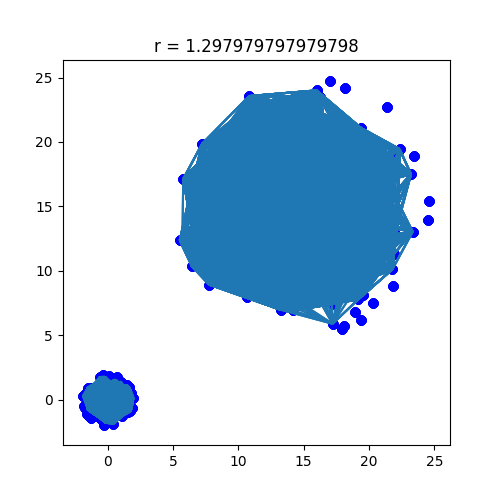}
\end{minipage}
\begin{minipage}[c]{0.19\linewidth}
  \centering
  \includegraphics[width=2.8cm]{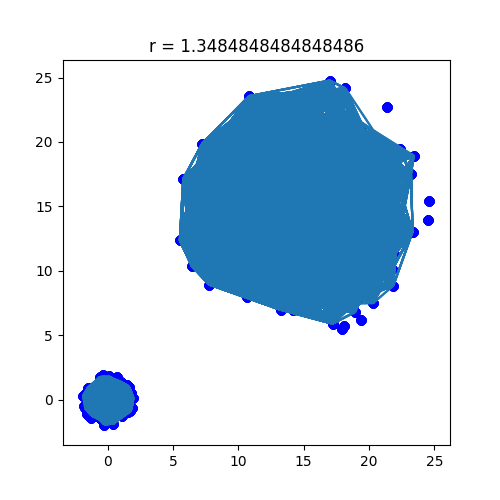}
\end{minipage}
\begin{minipage}[c]{0.19\linewidth}
  \centering
  \includegraphics[width=2.8cm]{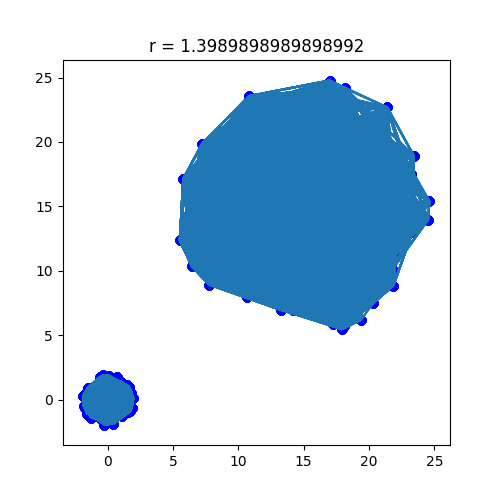}
\end{minipage}
\begin{minipage}[c]{0.19\linewidth}
  \centering
  \includegraphics[width=2.8cm]{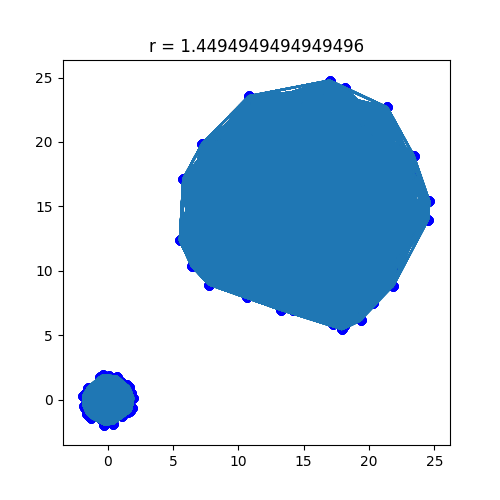}
\end{minipage}
\begin{minipage}[c]{0.19\linewidth}
  \centering
  \includegraphics[width=2.8cm]{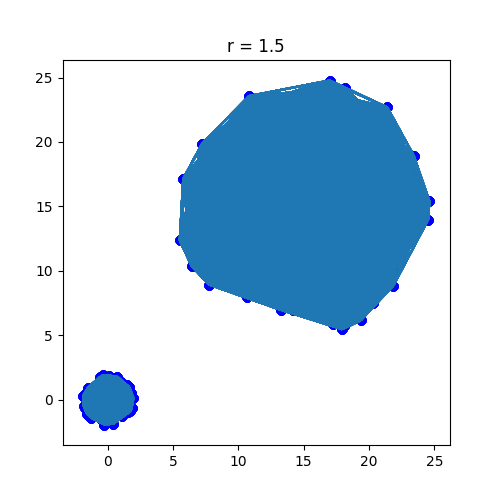}
\end{minipage}
  \caption{LS-LVR complexes - the local scale multipliers are given in the title of each image.}
  \label{fig:LS_LVR_knn_rho}
\end{figure}

\section{Data and Model Complexities: Total Lifetimes of Homology Groups}
\label{sec:data_model_complexities}
We provide the data model complexities for all binary datasets and trained classifiers for the three applications.

\begin{table}[h]
\centering
\small
\caption{Data complexity for MNIST: Total life of H0 groups}
\begin{tabular}{rllllllllll}
\hline
    & 0   & 1     & 2     & 3     & 4     & 5      & 6     & 7     & 8      & 9      \\
\hline
  0 &     & 153.0 & 543.0 & 607.0 & 388.0 & 758.0  & 734.0 & 271.0 & 634.0  & 413.0  \\
  1 &     &       & 342.0 & 262.0 & 287.0 & 212.0  & 241.0 & 340.0 & 317.0  & 219.0  \\
  2 &     &       &       & 951.0 & 564.0 & 490.0  & 532.0 & 672.0 & 805.0  & 549.0  \\
  3 &     &       &       &       & 381.0 & 1292.0 & 454.0 & 493.0 & 1111.0 & 553.0  \\
  4 &     &       &       &       &       & 488.0  & 582.0 & 917.0 & 594.0  & 1682.0 \\
  5 &     &       &       &       &       &        & 548.0 & 419.0 & 1248.0 & 650.0  \\
  6 &     &       &       &       &       &        &       & 242.0 & 555.0  & 388.0  \\
  7 &     &       &       &       &       &        &       &       & 522.0  & 1283.0 \\
  8 &     &       &       &       &       &        &       &       &        & 731.0  \\
  9 &     &       &       &       &       &        &       &       &        &        \\
\hline
\end{tabular}
\end{table}
\begin{table}[h]
\centering
\small
\caption{Data complexity for MNIST: Total life of H1 groups}
\begin{tabular}{rllllllllll}
\hline
    & 0   & 1   & 2     & 3     & 4     & 5     & 6     & 7     & 8     & 9     \\
\hline
  0 &     & 9.0 & 205.0 & 202.0 & 91.0  & 300.0 & 229.0 & 78.0  & 211.0 & 112.0 \\
  1 &     &     & 72.0  & 40.0  & 73.0  & 24.0  & 79.0  & 60.0  & 78.0  & 37.0  \\
  2 &     &     &       & 400.0 & 211.0 & 183.0 & 216.0 & 228.0 & 350.0 & 178.0 \\
  3 &     &     &       &       & 178.0 & 542.0 & 135.0 & 164.0 & 480.0 & 203.0 \\
  4 &     &     &       &       &       & 192.0 & 166.0 & 350.0 & 263.0 & 714.0 \\
  5 &     &     &       &       &       &       & 195.0 & 143.0 & 536.0 & 237.0 \\
  6 &     &     &       &       &       &       &       & 60.0  & 210.0 & 121.0 \\
  7 &     &     &       &       &       &       &       &       & 204.0 & 459.0 \\
  8 &     &     &       &       &       &       &       &       &       & 297.0 \\
  9 &     &     &       &       &       &       &       &       &       &       \\
\hline
\end{tabular}
\end{table}
\begin{table}[h]
\centering
\small
\caption{Model complexity for MNIST: Total life of H0 groups}
\begin{tabular}{rllllllllll}
\hline
    & 0   & 1     & 2     & 3     & 4     & 5      & 6     & 7     & 8      & 9      \\
\hline
  0 &     & 153.0 & 543.0 & 607.0 & 388.0 & 758.0  & 734.0 & 271.0 & 634.0  & 413.0  \\
  1 &     &       & 342.0 & 262.0 & 287.0 & 212.0  & 241.0 & 340.0 & 317.0  & 219.0  \\
  2 &     &       &       & 951.0 & 564.0 & 490.0  & 532.0 & 672.0 & 805.0  & 549.0  \\
  3 &     &       &       &       & 381.0 & 1292.0 & 454.0 & 493.0 & 1111.0 & 553.0  \\
  4 &     &       &       &       &       & 488.0  & 582.0 & 917.0 & 594.0  & 1682.0 \\
  5 &     &       &       &       &       &        & 548.0 & 419.0 & 1248.0 & 650.0  \\
  6 &     &       &       &       &       &        &       & 242.0 & 555.0  & 388.0  \\
  7 &     &       &       &       &       &        &       &       & 522.0  & 1283.0 \\
  8 &     &       &       &       &       &        &       &       &        & 731.0  \\
  9 &     &       &       &       &       &        &       &       &        &        \\
\hline
\end{tabular}
\end{table}
\begin{table}[h]
\centering
\small
\caption{Model complexity for MNIST: Total life of H1 groups}
\begin{tabular}{rllllllllll}
\hline
    & 0   & 1   & 2     & 3     & 4     & 5     & 6     & 7     & 8     & 9     \\
\hline
  0 &     & 9.0 & 205.0 & 202.0 & 91.0  & 300.0 & 229.0 & 78.0  & 211.0 & 112.0 \\
  1 &     &     & 72.0  & 40.0  & 73.0  & 24.0  & 79.0  & 60.0  & 78.0  & 37.0  \\
  2 &     &     &       & 400.0 & 211.0 & 183.0 & 216.0 & 228.0 & 350.0 & 178.0 \\
  3 &     &     &       &       & 178.0 & 542.0 & 135.0 & 164.0 & 480.0 & 203.0 \\
  4 &     &     &       &       &       & 192.0 & 166.0 & 350.0 & 263.0 & 714.0 \\
  5 &     &     &       &       &       &       & 195.0 & 143.0 & 536.0 & 237.0 \\
  6 &     &     &       &       &       &       &       & 60.0  & 210.0 & 121.0 \\
  7 &     &     &       &       &       &       &       &       & 204.0 & 459.0 \\
  8 &     &     &       &       &       &       &       &       &       & 297.0 \\
  9 &     &     &       &       &       &       &       &       &       &       \\
\hline
\end{tabular}
\end{table}
\begin{table}[h]
\centering
\small
\caption{Data complexity for FashionMNIST: Total life of H0 groups}
\begin{tabular}{lllllllllll}
\hline
             & T-shirt/top   & Trouser   & Pullover   & Dress   & Coat   & Sandal   & Shirt   & Sneaker   & Bag   & Ankle boot   \\
\hline
 T-shirt/top &               & 426.0     & 698.0      & 975.0   & 539.0  & 206.0    & 1946.0  & 122.0     & 465.0 & 173.0        \\
 Trouser     &               &           & 273.0      & 668.0   & 300.0  & 96.0     & 410.0   & 79.0      & 181.0 & 88.0         \\
 Pullover    &               &           &            & 488.0   & 2497.0 & 138.0    & 2108.0  & 98.0      & 390.0 & 126.0        \\
 Dress       &               &           &            &         & 775.0  & 125.0    & 980.0   & 92.0      & 310.0 & 128.0        \\
 Coat        &               &           &            &         &        & 124.0    & 1792.0  & 108.0     & 434.0 & 121.0        \\
 Sandal      &               &           &            &         &        &          & 149.0   & 724.0     & 314.0 & 554.0        \\
 Shirt       &               &           &            &         &        &          &         & 100.0     & 500.0 & 151.0        \\
 Sneaker     &               &           &            &         &        &          &         &           & 272.0 & 735.0        \\
 Bag         &               &           &            &         &        &          &         &           &       & 287.0        \\
 Ankle boot  &               &           &            &         &        &          &         &           &       &              \\
\hline
\end{tabular}
\end{table}
\begin{table}[h]
\centering
\small
\caption{Data complexity for FashionMNIST: Total life of H1 groups}
\begin{tabular}{lllllllllll}
\hline
             & T-shirt/top   & Trouser   & Pullover   & Dress   & Coat   & Sandal   & Shirt   & Sneaker   & Bag   & Ankle boot   \\
\hline
 T-shirt/top &               & 49.0      & 105.0      & 177.0   & 81.0   & 34.0     & 445.0   & 14.0      & 51.0  & 23.0         \\
 Trouser     &               &           & 22.0       & 100.0   & 31.0   & 9.0      & 44.0    &           & 8.0   & 5.0          \\
 Pullover    &               &           &            & 99.0    & 692.0  & 20.0     & 592.0   & 2.0       & 75.0  & 13.0         \\
 Dress       &               &           &            &         & 134.0  & 35.0     & 148.0   & 13.0      & 41.0  & 7.0          \\
 Coat        &               &           &            &         &        & 16.0     & 561.0   & 6.0       & 70.0  & 11.0         \\
 Sandal      &               &           &            &         &        &          & 23.0    & 156.0     & 30.0  & 92.0         \\
 Shirt       &               &           &            &         &        &          &         & 4.0       & 92.0  & 16.0         \\
 Sneaker     &               &           &            &         &        &          &         &           & 30.0  & 154.0        \\
 Bag         &               &           &            &         &        &          &         &           &       & 37.0         \\
 Ankle boot  &               &           &            &         &        &          &         &           &       &              \\
\hline
\end{tabular}
\end{table}
\begin{table}[h]
\centering
\small
\caption{Model complexity for FashionMNIST: Total life of H0 groups}
\begin{tabular}{lllllllllll}
\hline
             & T-shirt/top   & Trouser   & Pullover   & Dress   & Coat   & Sandal   & Shirt   & Sneaker   & Bag   & Ankle boot   \\
\hline
 T-shirt/top &               & 426.0     & 698.0      & 975.0   & 539.0  & 206.0    & 1946.0  & 122.0     & 465.0 & 173.0        \\
 Trouser     &               &           & 273.0      & 668.0   & 300.0  & 96.0     & 410.0   & 79.0      & 181.0 & 88.0         \\
 Pullover    &               &           &            & 488.0   & 2497.0 & 138.0    & 2108.0  & 98.0      & 390.0 & 126.0        \\
 Dress       &               &           &            &         & 775.0  & 125.0    & 980.0   & 92.0      & 310.0 & 128.0        \\
 Coat        &               &           &            &         &        & 124.0    & 1792.0  & 108.0     & 434.0 & 121.0        \\
 Sandal      &               &           &            &         &        &          & 149.0   & 724.0     & 314.0 & 554.0        \\
 Shirt       &               &           &            &         &        &          &         & 100.0     & 500.0 & 151.0        \\
 Sneaker     &               &           &            &         &        &          &         &           & 272.0 & 735.0        \\
 Bag         &               &           &            &         &        &          &         &           &       & 287.0        \\
 Ankle boot  &               &           &            &         &        &          &         &           &       &              \\
\hline
\end{tabular}
\end{table}
\begin{table}[h]
\centering
\small
\caption{Model complexity for FashionMNIST: Total life of H1 groups}
\begin{tabular}{lllllllllll}
\hline
             & T-shirt/top   & Trouser   & Pullover   & Dress   & Coat   & Sandal   & Shirt   & Sneaker   & Bag   & Ankle boot   \\
\hline
 T-shirt/top &               & 49.0      & 105.0      & 177.0   & 81.0   & 34.0     & 445.0   & 14.0      & 51.0  & 23.0         \\
 Trouser     &               &           & 22.0       & 100.0   & 31.0   & 9.0      & 44.0    &           & 8.0   & 5.0          \\
 Pullover    &               &           &            & 99.0    & 692.0  & 20.0     & 592.0   & 2.0       & 75.0  & 13.0         \\
 Dress       &               &           &            &         & 134.0  & 35.0     & 148.0   & 13.0      & 41.0  & 7.0          \\
 Coat        &               &           &            &         &        & 16.0     & 561.0   & 6.0       & 70.0  & 11.0         \\
 Sandal      &               &           &            &         &        &          & 23.0    & 156.0     & 30.0  & 92.0         \\
 Shirt       &               &           &            &         &        &          &         & 4.0       & 92.0  & 16.0         \\
 Sneaker     &               &           &            &         &        &          &         &           & 30.0  & 154.0        \\
 Bag         &               &           &            &         &        &          &         &           &       & 37.0         \\
 Ankle boot  &               &           &            &         &        &          &         &           &       &              \\
\hline
\end{tabular}
\end{table}
\begin{table}[h]
\centering
\small
\caption{Data complexity for CIFAR10: Total life of H0 groups}
\begin{tabular}{lllllllllll}
\hline
            & airplane   & automobile   & bird   & cat    & deer   & dog    & frog   & horse   & ship   & truck   \\
\hline
 airplane   &            & 990.0        & 1384.0 & 1019.0 & 1004.0 & 890.0  & 766.0  & 1097.0  & 1287.0 & 1048.0  \\
 automobile &            &              & 1009.0 & 1111.0 & 1011.0 & 888.0  & 956.0  & 1008.0  & 1129.0 & 1328.0  \\
 bird       &            &              &        & 1417.0 & 1431.0 & 1341.0 & 1144.0 & 1336.0  & 1090.0 & 973.0   \\
 cat        &            &              &        &        & 1307.0 & 1562.0 & 1281.0 & 1255.0  & 1069.0 & 1058.0  \\
 deer       &            &              &        &        &        & 1132.0 & 1101.0 & 1154.0  & 962.0  & 887.0   \\
 dog        &            &              &        &        &        &        & 1157.0 & 1383.0  & 987.0  & 852.0   \\
 frog       &            &              &        &        &        &        &        & 865.0   & 783.0  & 819.0   \\
 horse      &            &              &        &        &        &        &        &         & 1085.0 & 1132.0  \\
 ship       &            &              &        &        &        &        &        &         &        & 1201.0  \\
 truck      &            &              &        &        &        &        &        &         &        &         \\
\hline
\end{tabular}
\end{table}
\begin{table}[h]
\centering
\small
\caption{Data complexity for CIFAR10: Total life of H1 groups}
\begin{tabular}{lllllllllll}
\hline
            & airplane   & automobile   & bird   & cat   & deer   & dog   & frog   & horse   & ship   & truck   \\
\hline
 airplane   &            & 267.0        & 339.0  & 216.0 & 221.0  & 223.0 & 158.0  & 259.0   & 415.0  & 239.0   \\
 automobile &            &              & 266.0  & 322.0 & 239.0  & 266.0 & 254.0  & 324.0   & 353.0  & 460.0   \\
 bird       &            &              &        & 544.0 & 434.0  & 490.0 & 294.0  & 377.0   & 242.0  & 259.0   \\
 cat        &            &              &        &       & 389.0  & 480.0 & 289.0  & 399.0   & 302.0  & 305.0   \\
 deer       &            &              &        &       &        & 338.0 & 280.0  & 282.0   & 279.0  & 197.0   \\
 dog        &            &              &        &       &        &       & 259.0  & 390.0   & 288.0  & 239.0   \\
 frog       &            &              &        &       &        &       &        & 251.0   & 175.0  & 149.0   \\
 horse      &            &              &        &       &        &       &        &         & 303.0  & 301.0   \\
 ship       &            &              &        &       &        &       &        &         &        & 373.0   \\
 truck      &            &              &        &       &        &       &        &         &        &         \\
\hline
\end{tabular}
\end{table}
\begin{table}[h]
\centering
\small
\caption{Model complexity for CIFAR10: Total life of H0 groups}
\begin{tabular}{lllllllllll}
\hline
            & airplane   & automobile   & bird   & cat    & deer   & dog    & frog   & horse   & ship   & truck   \\
\hline
 airplane   &            & 990.0        & 1384.0 & 1019.0 & 1004.0 & 890.0  & 766.0  & 1097.0  & 1287.0 & 1048.0  \\
 automobile &            &              & 1009.0 & 1111.0 & 1011.0 & 888.0  & 956.0  & 1008.0  & 1129.0 & 1328.0  \\
 bird       &            &              &        & 1417.0 & 1431.0 & 1341.0 & 1144.0 & 1336.0  & 1090.0 & 973.0   \\
 cat        &            &              &        &        & 1307.0 & 1562.0 & 1281.0 & 1255.0  & 1069.0 & 1058.0  \\
 deer       &            &              &        &        &        & 1132.0 & 1101.0 & 1154.0  & 962.0  & 887.0   \\
 dog        &            &              &        &        &        &        & 1157.0 & 1383.0  & 987.0  & 852.0   \\
 frog       &            &              &        &        &        &        &        & 865.0   & 783.0  & 819.0   \\
 horse      &            &              &        &        &        &        &        &         & 1085.0 & 1132.0  \\
 ship       &            &              &        &        &        &        &        &         &        & 1201.0  \\
 truck      &            &              &        &        &        &        &        &         &        &         \\
\hline
\end{tabular}
\end{table}
\begin{table}[h]
\centering
\small
\caption{Model complexity for CIFAR10: Total life of H1 groups}
\begin{tabular}{lllllllllll}
\hline
            & airplane   & automobile   & bird   & cat   & deer   & dog   & frog   & horse   & ship   & truck   \\
\hline
 airplane   &            & 267.0        & 339.0  & 216.0 & 221.0  & 223.0 & 158.0  & 259.0   & 415.0  & 239.0   \\
 automobile &            &              & 266.0  & 322.0 & 239.0  & 266.0 & 254.0  & 324.0   & 353.0  & 460.0   \\
 bird       &            &              &        & 544.0 & 434.0  & 490.0 & 294.0  & 377.0   & 242.0  & 259.0   \\
 cat        &            &              &        &       & 389.0  & 480.0 & 289.0  & 399.0   & 302.0  & 305.0   \\
 deer       &            &              &        &       &        & 338.0 & 280.0  & 282.0   & 279.0  & 197.0   \\
 dog        &            &              &        &       &        &       & 259.0  & 390.0   & 288.0  & 239.0   \\
 frog       &            &              &        &       &        &       &        & 251.0   & 175.0  & 149.0   \\
 horse      &            &              &        &       &        &       &        &         & 303.0  & 301.0   \\
 ship       &            &              &        &       &        &       &        &         &        & 373.0   \\
 truck      &            &              &        &       &        &       &        &         &        &         \\
\hline
\end{tabular}
\end{table}

\clearpage


\begin{thebibliography}{10}
\providecommand{\url}[1]{#1}
\csname url@samestyle\endcsname
\providecommand{\newblock}{\relax}
\providecommand{\bibinfo}[2]{#2}
\providecommand{\BIBentrySTDinterwordspacing}{\spaceskip=0pt\relax}
\providecommand{\BIBentryALTinterwordstretchfactor}{4}
\providecommand{\BIBentryALTinterwordspacing}{\spaceskip=\fontdimen2\font plus
\BIBentryALTinterwordstretchfactor\fontdimen3\font minus
  \fontdimen4\font\relax}
\providecommand{\BIBforeignlanguage}[2]{{%
\expandafter\ifx\csname l@#1\endcsname\relax

\else
\language=\csname l@#1\endcsname
\fi
#2}}
\providecommand{\BIBdecl}{\relax}
\BIBdecl

\bibitem{arpit2017closer}
D.~Arpit, S.~Jastrz{\k{e}}bski, N.~Ballas, D.~Krueger, E.~Bengio, M.~S. Kanwal,
  T.~Maharaj, A.~Fischer, A.~Courville, Y.~Bengio \emph{et~al.}, ``A closer
  look at memorization in deep networks,'' \emph{arXiv preprint
  arXiv:1706.05394}, 2017.

\bibitem{zhang2016understanding}
C.~Zhang, S.~Bengio, M.~Hardt, B.~Recht, and O.~Vinyals, ``Understanding deep
  learning requires rethinking generalization,'' \emph{arXiv preprint
  arXiv:1611.03530}, 2016.

\bibitem{kawaguchi2017generalization}
K.~Kawaguchi, L.~P. Kaelbling, and Y.~Bengio, ``Generalization in deep
  learning,'' \emph{arXiv preprint arXiv:1710.05468}, 2017.

\bibitem{HoB2002}
T.~K. Ho and M.~Basu, ``Complexity measures of supervised classification
  problems,'' \emph{{IEEE} Trans. Pattern Anal. Mach. Intell.}, vol.~24, no.~3,
  pp. 289--300, Mar. 2002.

\bibitem{edelsbrunner2008persistent}
H.~Edelsbrunner and J.~Harer, ``Persistent homology---a survey,''
  \emph{Contemp. Math.}, vol. 453, pp. 257--282, 2008.

\bibitem{Carlsson2009}
G.~Carlsson, ``Topology and data,'' \emph{Bull. Amer. Math. Soc.}, vol.~46,
  no.~2, pp. 255--308, Apr. 2009.

\bibitem{VarshneyR2015}
K.~R. Varshney and K.~N. Ramamurthy, ``Persistent topology of decision
  boundaries,'' in \emph{Proc. IEEE Int. Conf. Acoust. Speech Signal Process.},
  Brisbane, Australia, Apr. 2015, pp. 3931--3935.

\bibitem{GussS2018}
W.~H. Guss and R.~Salakhutdinov, ``On characterizing the capacity of neural
  networks using algebraic topology,'' \emph{arXiv preprint arXiv:1802.04443},
  2018.

\bibitem{BianchiniS2014}
M.~Bianchini and F.~Scarselli, ``On the complexity of neural network
  classifiers: A comparison between shallow and deep architectures,''
  \emph{{IEEE} Trans. Neural Netw. Learn. Syst.}, vol.~25, no.~8, pp.
  1553--1565, Aug. 2014.

\bibitem{Milnor1964}
J.~Milnor, ``On the {B}etti numbers of real varieties,'' \emph{Proc. Am. Math.
  Soc.}, vol.~15, no.~2, pp. 275--280, Apr. 1964.

\bibitem{BasuPR2005}
S.~Basu, R.~Pollack, and M.-F. Roy, ``{B}etti number bounds, applications and
  algorithms,'' in \emph{Combinatorial and Computational Geometry}, J.~E.
  Goodman, J.~Pach, and E.~Welzl, Eds.\hskip 1em plus 0.5em minus 0.4em\relax
  New York, NY: Cambridge University Press, 2005, pp. 87--96.

\bibitem{VarshneyW2010}
K.~R. Varshney and A.~S. Willsky, ``Classification using geometric level
  sets,'' \emph{J. Mach. Learn. Res.}, vol.~11, pp. 491--516, Feb. 2010.

\bibitem{borsuk1948imbedding}
K.~Borsuk, ``On the imbedding of systems of compacta in simplicial complexes,''
  \emph{Fundamenta Mathematicae}, vol.~35, no.~1, pp. 217--234, 1948.

\bibitem{niyogi2008finding}
P.~Niyogi, S.~Smale, and S.~Weinberger, ``Finding the homology of submanifolds
  with high confidence from random samples,'' \emph{Discrete Comput. Geom.},
  vol.~39, no. 1--3, pp. 419--441, Mar. 2008.

\bibitem{zomorodian2010fast}
A.~Zomorodian, ``Fast construction of the {V}ietoris-{R}ips complex,''
  \emph{Computers \& Graphics}, vol.~34, no.~3, pp. 263--271, 2010.

\bibitem{berry2016consistent}
T.~Berry and T.~Sauer, ``Consistent manifold representation for topological
  data analysis,'' \emph{arXiv preprint arXiv:1606.02353}, 2016.

\bibitem{zomorodian2005computing}
A.~Zomorodian and G.~Carlsson, ``Computing persistent homology,''
  \emph{Discrete Comput. Geom.}, vol.~33, no.~2, pp. 249--274, Feb. 2005.

\bibitem{edelsbrunner2012persistent}
H.~Edelsbrunner and D.~Morozov, ``Persistent homology: Theory and practice,''
  in \emph{Proc. Europ. Congr. Math.}, Krak\'ow, Poland, Jul. 2012, pp. 31--50.

\bibitem{mnistlecun}
\BIBentryALTinterwordspacing
Y.~Lecun and C.~Cortes, ``The {MNIST} database of handwritten digits.''
  [Online]. Available: \url{http://yann.lecun.com/exdb/mnist/}
\BIBentrySTDinterwordspacing

\bibitem{xiao2017fmnist}
H.~Xiao, K.~Rasul, and R.~Vollgraf, ``Fashion-{MNIST}: A novel image dataset
  for benchmarking machine learning algorithms,'' \emph{arXiv preprint
  arXiv:1708.07747}, 2017.

\bibitem{krizhevsky2009learning}
A.~Krizhevsky, ``Learning multiple layers of features from tiny images,'' Tech.
  Rep., Apr. 2009.

\bibitem{simonyan2014very}
K.~Simonyan and A.~Zisserman, ``Very deep convolutional networks for
  large-scale image recognition,'' \emph{arXiv preprint arXiv:1409.1556}, 2014.

\end{thebibliography}
\end{document}